\newtheorem{theorem}{Theorem}
\newtheorem{lemma}{Lemma}
\newtheorem{corollary}{Corollary}
\newtheorem{definition}{Definition}
\renewcommand{\Pr}{\mathbb{P}}
\newcommand{\E}{\mathbb{E}}
\newcommand{\half}{\frac{1}{2}}
\newcommand{\floor}[1]{\lfloor#1\rfloor}
\newcommand{\Acal}{\mathcal{A}}
\newcommand{\Fcal}{\mathcal{F}}
\newcommand{\secref}[1]{Sec.~\ref{#1}}
\newcommand{\figref}[1]{Fig.~\ref{#1}}
\renewcommand{\eqref}[1]{Eq.~(\ref{#1})}
\newcommand{\lemref}[1]{Lemma~\ref{#1}}
\newcommand{\thmref}[1]{Theorem~\ref{#1}}
\newcommand{\ignore}[1]{}
\newcommand{\abs}[1]{\left|#1\right|}
\newcommand{\lr}[1]{\left(#1\right)}
\newcommand{\set}[1]{\left\{#1\right\}}
\renewcommand{\t}[1]{\widetilde{#1}}
\newcommand{\KL}{d_\mathrm{KL}}
\newcommand{\ind}[1]{1\!\!1_{#1}}
\newcommand{\tv}[2]{d^\Fcal_{\mathrm{TV}}(#1,#2)}
\newcommand{\non}{\nonumber}
\newcommand{\eps}{\epsilon}
\newcommand{\Q}{\mathcal{Q}}
\newcommand{\D}{\Delta}
\newcommand{\sig}{\sigma}
\newcommand{\cut}{\mathrm{cut}}
\newcommand{\anc}{\rho^*}
\newcommand{\depth}{d}
\newcommand{\width}{w}
\newcommand{\clip}{\mathrm{clip}}
\begin{document}

\title{Bandits with Switching Costs: $T^{2/3}$ Regret}

\author{%
\makebox[0.4\linewidth]{Ofer Dekel}\\
Microsoft Research\\
\texttt{oferd@microsoft.com}
\and
\makebox[0.4\linewidth]{Jian Ding\thanks{Most of this work was done while the author was at Microsoft Research, Redmond.}}\\
University of Chicago\\
\texttt{jianding@galton.uchicago.edu}
\and\\
\makebox[0.4\linewidth]{Tomer Koren\footnotemark[1]}\\
Technion\\
\texttt{tomerk@technion.ac.il}
\and\\
\makebox[0.4\linewidth]{Yuval Peres}\\
Microsoft Research\\
\texttt{peres@microsoft.com}
}

\date{}

\maketitle

\begin{abstract}%
We study the adversarial multi-armed bandit problem in a setting where
the player incurs a unit cost each time he switches actions.  We prove
that the player's $T$-round minimax regret in this setting is
$\widetilde{\Theta}(T^{2/3})$, thereby closing a fundamental gap in
our understanding of learning with bandit feedback. In the
corresponding full-information version of the problem, the minimax
regret is known to grow at a much slower rate of $\Theta(\sqrt{T})$.
The difference between these two rates provides the \emph{first}
indication that learning with bandit feedback can be significantly
harder than learning with full-information feedback (previous results
only showed a different dependence on the number of actions, but not
on $T$.) 

In addition to characterizing the inherent difficulty of the
multi-armed bandit problem with switching costs, our results also
resolve several other open problems in online learning. One direct
implication is that learning with bandit feedback against
bounded-memory adaptive adversaries has a minimax regret of
$\widetilde{\Theta}(T^{2/3})$. Another implication is that the minimax
regret of online learning in adversarial Markov decision processes
(MDPs) is $\widetilde{\Theta}(T^{2/3})$. The key to all of our results
is a new randomized construction of a multi-scale random walk,
which is of independent interest and likely to prove useful in additional 
settings.
\end{abstract}

\newpage

\section{Introduction}
Online learning with a finite set of actions is a fundamental problem
in machine learning, with two important special cases: the
\emph{Adversarial (Non-Stochastic) Multi-Armed Bandit} \citep{Auer:02}
and \emph{Predicting with Expert Advice}
\citep{CesaFrHaHeScWa97,FS97}. This problem is often presented as a
$T$-round repeated game between a player and an adversary: on each
round of the game, the player chooses an \emph{action}\footnote{In the
  bandit problem, each action is called an \emph{arm}; in the experts
  problem, each action is called an \emph{expert}.} from the set $[k]
= \{1,\ldots,k\}$ and incurs a loss in $[0,1]$ for that action. The
player is allowed to randomize, i.e., on each round he selects a
distribution over actions and draws an action from that
distribution. The loss corresponding to each action on each round is
set in advance by the adversary, and in particular, the loss of each
action can vary from round to round. The player's goal is to minimize
the total loss accumulated over the course of the game.

The bandit problem and the experts problem differ in the feedback
received by the player after each round. In the bandit problem, the
player only observes his loss (a single number) on each round; this is
called \emph{bandit feedback}. In the experts problem, the player
observes the loss assigned to each possible action (for a total of $k$
real numbers in each round); this is called \emph{full feedback} or
\emph{full information}. A player that receives bandit feedback must
balance an exploration/exploitation trade-off, while a player that
receives full feedback is only concerned with exploitation.

For example, say that we manage an investment portfolio, we receive
daily advice from $k$ financial experts, and on each day we must
follow the advice of one expert. The loss associated with each expert
on each day reflects the amount of money we would lose by following
that expert's advice on that day. If we know the advice given by each
expert, the problem is said to provide full feedback. Alternatively,
if we purchase advice from a single expert on each day, and the
advice of the other $k-1$ experts remains unknown, the problem is said
to provide bandit feedback.



In the problem just described, the player is allowed to switch freely
between actions. An equally interesting setting is one where each
switch incurs a \emph{switching cost}: In addition to the losses
chosen by the adversary, the player pays a penalty each time his
action differs from the one he played on the previous round. In the
motivating example described above, switching our primary financial
consultant may require terminating a contract with the previous expert
and negotiating contract with the new one, or it may just cost us the
fees and commissions that result from a significant change in
investment strategy. Switching costs arise naturally in a variety of
other applications: In online web applications, switching the content
of a website too frequently can be annoying to users; in industrial
applications, switching actions might entail reconfiguring a
production line.  Moreover, \citet{Geulen:10} reduced a family of
online buffering problems to switching cost problems; similarly,
\citet{gyorgy2011near} used the switching cost setting to solve the
limited-delay universal lossy source coding problem.

We focus on analyzing the inherent difficulty of online learning with
switching costs, using the game-theoretic notion of \emph{minimax
  regret}. To define this notion, we must first specify the setting
formally. Before the game begins, the adversary chooses a loss
functions $\ell_1,\ldots,\ell_T$, where each $\ell_t$ maps the action
set $[k]$ to $[0,1]$. Since the entire sequence is chosen in advance,
we say that the adversary is \emph{oblivious} (to the player's
actions). On round $t$, the player selects a distribution over the set
of actions and draws an action $X_t$ from that distribution. The
player then incurs the loss $\ell_t(X_t) + 1\!\!1_{X_t \neq X_{t-1}}$,
which includes the adversarially chosen loss $\ell_t(X_t)$ and the
switching cost. To make the loss on the first round well-defined, we set $X_0=0$ (so the
first action always counts as a switch). The player's cumulative loss
at the end of the game equals $\sum_{t=1}^T \big(\ell_t(X_t) +
1\!\!1_{X_t \neq X_{t-1}}\big)$.

Since the loss functions are adversarial, the cumulative loss is only
meaningful when compared to an adequate baseline. Therefore, we
compare the player's cumulative loss to the loss of the best fixed
policy (in hindsight), which is a policy that chooses the same action
on all $T$ rounds. Formally, we define the player's \emph{regret} at
the end of the game as
\begin{equation}\label{eqn:minimax}
R ~=~ \sum_{t=1}^T \left( \ell_t(X_t) + \ind{X_t \neq X_{t-1}} \right) ~-~ \min_{x \in [k]} \sum_{t=1}^T \ell_t(x) ~~.
\end{equation}
While regret measures the player's performance on a given instance of
the game, the inherent difficulty of the game itself is measured by
\emph{minimax expected regret} (or just \emph{minimax regret} for
brevity). Intuitively, minimax regret is the expected regret when both
the adversary and the player behave optimally. Formally, minimax
regret is the minimum over all randomized player strategies, of the
maximum over all loss sequences, of $\E[R]$. In this paper, our
primary focus is to determine the asymptotic growth rate of the
minimax regret as a function of the number of rounds $T$ and the
number of actions $k$.

Minimax regret rates are already well understood in several of the
settings discussed above. Without switching costs, the minimax regret
of the adversarial multi-armed bandit problem is $\Theta(\sqrt{Tk})$
(see \citet{Auer:02,CesaBianchi:06}) and the minimax regret of the
experts problem is $\Theta(\sqrt{T \log{k}})$ (see
\citet{LW94,FS97,CesaBianchi:06}). This implies that when no switching
costs are added, the bandit problem is not substantially more difficult 
than the experts problem (at least when the number
of actions is constant), despite the added burden of exploration.

When switching costs are added, the previous literature does not
provide a full characterization of minimax regret.  Clearly, the lower
bound without switching costs still apply with switching costs are
added. In the full feedback setting with switching costs, the
\emph{Follow the Lazy Leader} algorithm \citep{Kalai:05} and the
\emph{Shrinking Dartboard} algorithm \citep{Geulen:10} both guarantee
a matching upper bound of $O(\sqrt{T \log{k}})$, so the minimax regret
is $\Theta(\sqrt{T \log{k}})$. However, the minimax regret of the
bandit problem with switching costs was not well
understood. \citet{Arora:12} presented a simple algorithm with a guaranteed
regret of $O(k^{1/3} T^{2/3})$, but a matching lower bound was not
known.

Recently, \citet{CesaBianchiDeSh13} addressed this gap, but fell short
of resolving it. Specifically, they modified the game by allowing the
loss per round to drift out of the interval $[0,1]$ and to possibly
grow in magnitude to be as large as $\Theta(\sqrt{T})$.  
In this setting, they proved that the
minimax regret (with a constant number of actions $k$) grows at a rate
of $\widetilde\Theta(T^{2/3})$.  However, allowing unbounded loss per
round is quite uncommon and not very natural. Also, it isn't clear
what implications their results have on the original problem (i.e., with
bounded losses), and whether their $\widetilde\Theta(T^{2/3})$ rate is 
merely an artifact of the enlarged range of admissible loss values.

\subsection{Our Results}

Our main result is a new $\widetilde \Omega(T^{2/3})$ lower bound on the
regret of the multi-armed bandit problem with switching costs (in the standard setup, with
losses bounded in $[0,1]$). 

\begin{theorem} \label{thm:main}
For any randomized player strategy that relies on bandit feedback,
there exists a sequence of loss functions $\ell_1,\ldots,\ell_T$ (where
$\ell_t:[k] \mapsto [0,1]$) that incurs a regret
of $R = \widetilde \Omega(k^{1/3} T^{2/3})$,
provided that $k \le T$.
\end{theorem}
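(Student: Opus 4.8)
The plan is to lower bound the regret on a carefully chosen \emph{random} instance and then extract a single worst-case loss sequence from it. Sample a hidden arm $a^\star$ uniformly from $[k]$, let $W_0,\dots,W_T$ be the \emph{multi-scale random walk} described below (a zero-mean process that lies in $[-\shalf,\shalf]$ with probability $1-T^{-\Omega(1)}$), fix a gap $\eps=\widetilde\Theta\big((k/T)^{1/3}\big)$, and define
\[
\loss_t(a)\;=\;\clip_{[0,1]}\!\Big(\shalf+\shalf W_t-\eps\,\ind{a=a^\star}\Big),\qquad a\in[k],\ t\in[T].
\]
On the event $\{\max_t|W_t|\le\shalf\}$ the clipping never triggers, every non-hidden arm carries the identical trajectory $\shalf+\shalf W_t$, so $a^\star$ is the unique arm that is best in hindsight and
\[
R\;=\;\eps\,N+S,\qquad N:=\big|\{t:X_t\neq a^\star\}\big|,\qquad S:=\sumto tT\ind{X_t\neq X_{t-1}};
\]
the complementary event has probability $T^{-\Omega(1)}$ and contributes $o(1)$ to $\E[R]$, so it may be discarded. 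It therefore suffices to show $\E[\eps N+S]=\widetilde\Omega(k^{1/3}T^{2/3})$, the expectation being over $a^\star$, over $W$, and over the player's internal randomness.

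The heart of the proof is the construction of $W$. A plain random walk must have per-step variance $O(1/T)$ to stay bounded, which makes the gap easy to detect (and yields only a $k^{1/3}T^{1/3}$ bound), while i.i.d.\ noise leaks $\Theta(\eps^2)$ of information per \emph{round}. Instead I would build $W$ additively along a depth-$O(\log T)$ tree on $\{0,1,\dots,T\}$ --- e.g.\ $W_t=W_{\pi(t)}+\xi_t$ with $\pi(t)=t-2^{\rho(t)}$, $\rho(t)$ the $2$-adic valuation of $t$, and $\xi_t\sim N(0,\sigma^2)$ independent --- taking $\sigma=\Theta(1/\log T)$ as large as the requirement $\Pr[\max_t|W_t|\le\shalf]=1-T^{-\Omega(1)}$ permits. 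The property I need is an \emph{undetectability} estimate: if the $-\eps$ shift of the hidden arm is superimposed on an arbitrary union of $m$ time intervals, then the law of the player's observation sequence differs from its law under the null instance (no discounted arm) by only $\widetilde O(\eps^2 m)$ in KL divergence. Intuitively this holds because, $W$ being built along a tree in which every time interval has tree-boundary $O(\log T)$, translating $W$ by a constant on a union of $m$ intervals perturbs only $\widetilde O(m)$ of the increments $\xi_t$, and each perturbed Gaussian increment contributes $O(\eps^2/\sigma^2)=\widetilde O(\eps^2)$ to the KL by the chain rule; so a run of \emph{any length} leaks only $\widetilde O(\eps^2)$. \textbf{This is the step I expect to be the main obstacle:} pinning down the right tree and the $O(\log T)$ bound on interval boundaries, controlling the cross-terms between the $m$ intervals, and --- most delicately --- arranging the change of measure so that it remains valid even though the player is adaptive, so the intervals on which the shift acts are themselves random and depend on the shift (this demands a coordinate-by-coordinate argument, conditioning on the observed history at each round, rather than a one-shot Girsanov-type computation).

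Granting the undetectability estimate, let $\Pr_0$ be the law of the player's trajectory under the null instance and $\Pr_a$ its law under $a^\star=a$. Since the discount affects observations only on rounds where arm $a$ is actually \emph{played}, and the rounds on which a fixed arm is played form $n_a$ intervals, the chain rule together with the estimate give $\KL(\Pr_0\,\Vert\,\Pr_a)=\widetilde O\big(\eps^2\,\E_{\Pr_0}[n_a]\big)$, and hence $\sum_{a\in[k]}\KL(\Pr_0\,\Vert\,\Pr_a)=\widetilde O\big(\eps^2\,\E_{\Pr_0}[S]\big)$ because $\sum_a n_a\le S+1$. Writing $N_a$ for the number of rounds on which $a$ is played, Pinsker's inequality and Cauchy--Schwarz give
\[
\frac1k\sum_{a\in[k]}\E_{\Pr_a}[N_a]\;\le\;\frac Tk+T\sqrt{\tfrac1{2k}\sum_{a\in[k]}\KL(\Pr_0\,\Vert\,\Pr_a)}\,,
\]
whose left-hand side is exactly $\E_{a^\star}\E[N_{a^\star}]$. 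Consequently, whenever $\E[S]=\widetilde O(k/\eps^2)$ this quantity is at most $\tfrac34 T$ (for $k\ge2$; the case $k=1$ is trivial), so $\E[N]\ge\tfrac14 T$ and $\E[\eps N]=\Omega(\eps T)$; the routine bookkeeping relating expectations under $\Pr_0$ and under the mixture $\Pr_{a^\star}$ uses $k\le T$ and is standard for bandit lower bounds. Splitting into cases: either $\E[S]=\widetilde\Omega(k/\eps^2)$, whence $\E[R]\ge\E[S]=\widetilde\Omega(k/\eps^2)$, or $\E[R]\ge\E[\eps N]=\Omega(\eps T)$; in either case $\E[R]=\widetilde\Omega\big(\min\{\eps T,\,k/\eps^2\}\big)$. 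The choice $\eps=\widetilde\Theta\big((k/T)^{1/3}\big)$ balances the two terms at $k^{1/3}T^{2/3}$ (up to polylogarithmic factors), and the hypothesis $k\le T$ is exactly what keeps this choice admissible, i.e.\ $\eps<\shalf$ so that $\loss_t\in[0,1]$. Finally, since $\E[R]=\widetilde\Omega(k^{1/3}T^{2/3})$ holds in expectation over $(a^\star,W)$, some realization of $(a^\star,W)$ --- that is, some fixed sequence $\loss_1,\dots,\loss_T$ --- forces expected regret $\widetilde\Omega(k^{1/3}T^{2/3})$ against the given strategy, which is the assertion of the theorem.
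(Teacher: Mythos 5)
Your proposal follows essentially the same route as the paper: the identical multi-scale random walk (your $\pi$ with the $2$-adic valuation is the paper's parent function $\rho$), the same clipped Gaussian-plus-hidden-gap construction, an information-theoretic bound relating detection of the hidden arm to the number of switches (your interval count $n_a$ is, up to a factor of two, the paper's per-arm switch count $M_i$, and $\sum_a n_a\le S+1$ plays the role of $\sum_i M_i=2M$), followed by Pinsker and the balance $\eps\sim(k/T)^{1/3}$, $\sigma\sim1/\log T$. The step you flag as the main obstacle is exactly the paper's \lemref{lem:tv-bound}: it handles the adaptive player via a chain-rule decomposition of the KL divergence conditioning each $Y_t$ on $Y_{\anc(t)}$, observes that a nonzero term arises only when the play pattern differs at $t$ and $\rho(t)$, and attributes each such $t$ to a switch time $s$ with $t\in\cut(s)$, so the total is bounded by (number of switches)$\times$(cut-width), with cut-width $O(\log T)$ for the MRW by \lemref{lem:mrw}.
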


When combined with the upper bound in \citet{Arora:12}, our result
implies that the minimax regret of the multi-armed bandit problem with
switching costs is $\widetilde \Theta(k^{1/3} T^{2/3})$. Thus when
switching costs are added, the bandit problem becomes substantially
more difficult than the corresponding experts problem. To the best of our
knowledge, this is the first example that exhibits (even for constant $k$) a clear gap
between the asymptotic difficulty, as $T$ grows, of online learning with bandit and
full feedback.

To prove \thmref{thm:main}, we apply (the easy direction of) Yao's
minimax principle \citep{Yao77}, which states that the regret of a
randomized player against the worst-case loss sequence is at least
 the minimax regret of the optimal \emph{deterministic}
player against a \emph{stochastic} loss sequence. In other words, as
an intermediate step toward proving \thmref{thm:main}, we construct a
stochastic sequence of loss functions\footnote{We use the notation $U_{i:j}$ as shorthand for the sequence $U_i,\ldots,U_j$ throughout.}, $L_{1:T}$, where each $L_t$ is
a random function from $[k]$ to $[0,1]$, such that
$$
\E\left[ \sum_{t=1}^T \big( L_t(X_t) + 1\!\!1_{X_t \neq X_{t-1}} \big) ~-~ \min_{x \in [k]} \sum_{t=1}^T L_t(x) \right]
~=~ \widetilde \Omega(k^{1/3} T^{2/3}) ~,
$$
for any deterministic player strategy.
%
%

After proving our lower bound for constant switching costs, we
generalize is to arbitrary switching costs (e.g., set the switching
cost to $T^q$, for some $q \in [-1,1]$). Additionally, we prove that
any algorithm for the multi-armed bandit problem that guarantees a
regret of $O(\sqrt{T})$ (without switching costs), such as the
algorithm presented in \citet{Auer:02}, can be forced to make
$\t\Omega(T)$ switches.
Finally, we observe that our problem is a
special case of an online Markov decision process (MDP) learning
problem with adversarial rewards and bandit feedback, and therefore
the minimax regret of that problem is also $\t\Omega(T^{2/3})$.

The paper is organized as follows: in \secref{sec:construction} we
describe the general construction of the stochastic loss sequence and
in \secref{sec:stochasticProcess} we present the stochastic process
that underlies our construction. We then prove our lower bound on
regret in \secref{sec:analysis} and present extensions and
implications in \secref{sec:extensions}.

\section{Constructing the Loss Sequence} \label{sec:construction}

\begin{figure}[t]
\renewcommand{\algorithmicrequire}{\textbf{Input:}}
\renewcommand{\algorithmicensure}{\textbf{Output:}}
\centering
\fbox{
\begin{minipage}{0.75\linewidth}
\begin{algorithmic}[1]
        \vspace{0.1cm}
	\REQUIRE time horizon $T > 0$, number of actions $k \ge 2$\\[0.1cm]
	
	\STATE Set $\eps = k^{1/3} T^{-1/3} / (9\log_2{T})$ and $\sig = 1/(9\log_2{T})$.\\[0.1cm]

	\STATE Choose $\chi \in [k]$ uniformly at random.\\[0.1cm]
	\STATE Draw $T$ independent zero-mean $\sig^2$-variance Gaussians $\xi_{1:T}$.\\[0.1cm]

	\STATE Define $W_{0:T}$ recursively by
	\begin{align*}
		W_0 &= 0~~, \\
		\forall~t \in [T]~~~~ W_t &= W_{\rho(t)} + \xi_t~~,
	\end{align*}
	where
		$\rho(t) = t - 2^{\delta(t)} \,,~
		\delta(t) = \max\set{i \ge 0 : 2^i \text{ divides } t} $.
	\smallskip

	\STATE For all $t \in [T]$ and $x \in [k]$, set
	\begin{align*}
	L'_t(x) &= W_t+\tfrac{1}{2} - \eps \cdot \ind{\chi = x} ,\\
	L_t(x) &= \clip \big(L'_t(x)\big),
	\end{align*}
	where $\clip(\alpha) = \min\{\max\{ \alpha, 0 \}, 1\}$.\\[0.1cm]
	\ENSURE loss functions $L_{1:T}$.\\[-0.2cm]~
\end{algorithmic}
\end{minipage}
}
\caption{The adversary's randomized algorithm for generating a loss
  sequence $L_{1:T}$, which ensures an expected regret of
  $\widetilde\Omega(k^{1/3} T^{2/3})$ against any deterministic player.}
\label{fig:adversary}
\end{figure}

\noindent
In this section we present our construction of a stochastic sequence of loss functions, $L_{1:T}$, which ensures an expected regret of $\widetilde\Omega(k^{1/3} T^{2/3})$ against any deterministic player.
The adversary's algorithm for generating the sequence $L_{1:T}$ is
given in \figref{fig:adversary}. The key to this algorithm is the
stochastic process $W_{1:T}$, defined on lines 3--4 of
\figref{fig:adversary}. The adversary draws a concrete sequence from
this process and uses it to define the loss values of all $k$ actions.
First, the adversary picks an action~$\chi \in [k]$ uniformly at
random to serve as the best action (whose loss is always smaller than
the loss of the other actions), and defines the intermediate loss
function sequence $L'_{1:T}$, whose values are not guaranteed to be
bounded in $[0,1]$. The loss of all actions $x \ne \chi$ is simply set
to $L'_t(x) = W_t + \half$. The loss of the best action $\chi$ is set
to $L'_t(\chi) = W_t + \half - \eps$, where $\eps$ is a predefined gap
parameter, and is therefore consistently better than the losses of the
other actions.  The loss sequence $L_{1:T}$ is obtained by taking the
intermediate sequence $L'_{1:T}$ and projecting each of its values to
the interval $[0,1]$.

When faced with the loss sequence $L_{1:T}$, the player attempts to
identify which of the $k$ actions has the smaller loss (or
equivalently, to reveal the value of $\chi$). Although the loss values
of the best action are deterministically separated from those of the
other actions by a constant gap, the player only observes one loss
value on each round, and never knows if his chosen action incurred the
higher loss or the lower loss.  Our analysis shows that the player's
ability to uncover information about the identity of the best action
depends on the characteristics of the stochastic process
$W_{1:T}$. For example, if this process were an i.i.d.~sequence, it is
easy to see that the player could identify the best action by
estimating the expected loss of every action to within $\eps/2$ (for
example, using Hoeffding's bound), requiring only $O(\sigma^2/\eps^2)$
samples of each action and at most $k-1$ switches between actions.
This example already implies that the dependency structure in our
construction of $W_{1:T}$ plays a central role.  We show that a
careful choice of the stochastic process $W_{1:T}$ ensures that the
amount of information uncovered by the player during the game is
tightly controlled by the number of switches he performs. Therefore,
to detect the best action, the player must switch actions frequently
and pay the associated switching costs.


\section{The Stochastic Process}\label{sec:stochasticProcess}

The key to our analysis is a careful choice of the stochastic process
$W_{1:T}$ that underlies the definition of $L_{1:T}$.  In this section
we describe a stochastic processes with a controllable dependence
structure, which includes i.i.d. Gaussian sequences and simple
Gaussian random walks as special cases.

Let $\xi_{1:T}$ be a sequence of independent zero-mean Gaussian random
variables with variance $\sigma^2$. Let $\rho:[T] \mapsto \{0\} \cup
[T]$ be a function that assigns each $t \in [T]$ with a \emph{parent}
$\rho(t)$. We allow $\rho$ to be any function that satisfies $\rho(t)
< t$ for all $t$.  Now define
\begin{align*}
W_0 &= 0~~, \\
\forall~t \in [T]~~~~ W_t &= W_{\rho(t)} + \xi_t~~.
\end{align*}
Note that the constraint $\rho(t) < t$ guarantees that a recursive
application of $\rho$ always leads back to zero.
%
The definition of the parent function $\rho$ determines the behavior
of the stochastic processes. For example, setting $\rho(t) = 0$
implies that $W_t = \xi_t$ for all $t$, so the stochastic process is
simply a sequence of i.i.d.~Gaussians. On the other hand, setting
$\rho(t) = t-1$ results in a simple Gaussian random walk.
Other definitions of $\rho$ can create interesting dependencies
between the variables of the stochastic process.

\subsection{Depth and Width}

We highlight two properties of the parent function $\rho$ (and consequently, of the induced stochastic process) that are essential to our analysis.

\begin{definition}[\emph{ancestors, depth}]
Given a parent function $\rho$, the set of ancestors of
$t$ is denoted by~\,$\anc(t)$ and defined as the set of positive indices
that are encountered when $\rho$ is applied recursively to
$t$. Formally, $\anc(t)$ is defined recursively as
\begin{align}
  &\anc(0) = \{\} \nonumber \\
\forall~t \in [T]~~~~& \anc(t) =  \anc\big(\rho(t)\big)~\cup~\{\rho(t)\}~~. \label{eqn:ancestor}
\end{align}
The depth of $\rho$ is then defined as $\depth(\rho) = \max_{t \in [T]} |\anc(t)|$.
\end{definition}

Using this definition, we can write $W_t = \xi_t+ \sum_{s \in \anc(t)} \xi_s$, where $\xi_0=0$.
Thus, if $\depth(\rho)=d$, the induced stochastic process includes
sums of at most $d$ independent Gaussians, each with variance $\sigma^2$.
This implies the following bound.

\begin{lemma} \label{lem:depth}
Let $W_{1:T}$ be the stochastic process defined by the parent function $\rho$.
Then 
\begin{align*}
\forall \delta\in(0,1) \qquad	\Pr\lr{\max_{t \in [T]} \abs{W_t}
	\le \sig \sqrt{2\depth(\rho) \log\tfrac{T}{\delta}} }  \ge 1-\delta~.
\end{align*}
\end{lemma}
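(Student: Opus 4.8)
The plan is to combine the decomposition of $W_t$ already established above the lemma with a Gaussian tail bound and a union bound over $t \in [T]$.

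By the identity $W_t = \xi_t + \sum_{s \in \anc(t)} \xi_s$ (with $\xi_0 = 0$), and since $\rho(u) < u$ forces the indices obtained by iterating $\rho$ from $t$ to be strictly decreasing, hence pairwise distinct, $W_t$ is a sum of $|\anc(t)| \le \depth(\rho)$ independent $\Ncal(0,\sig^2)$ variables — the constraint $\rho(u)<u$ also guarantees the iteration terminates at $0$, so $\xi_0$ is always one of these summands and contributes nothing. Consequently $W_t \sim \Ncal(0, v_t^2)$ with $v_t^2 \le \depth(\rho)\,\sig^2$.

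I would then apply the standard Gaussian tail estimate: for $Z \sim \Ncal(0,1)$ and $a \ge 0$ one has $\Pr(|Z| \ge a) \le e^{-a^2/2}$ (which follows from the one-sided bound $\Pr(Z \ge a) \le \tfrac12 e^{-a^2/2}$). Applying this to $W_t/v_t$ and using $v_t^2 \le \depth(\rho)\sig^2$ gives, for each fixed $t$, $\Pr(|W_t| \ge a) \le \exp\big(-a^2/(2\depth(\rho)\sig^2)\big)$. Taking $a = \sig\sqrt{2\depth(\rho)\log(T/\delta)}$ makes this at most $\delta/T$, and a union bound over the $T$ values of $t$ gives $\Pr(\max_{t\in[T]}|W_t| > a) \le \delta$, i.e., the complementary event has probability at least $1-\delta$, as claimed. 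There is no real obstacle in this argument; the only two points worth being careful about are that the variances of the Gaussian summands of $W_t$ genuinely add (because the summands are indexed by distinct integers, with $\xi_0$ always among them) so that $\Var(W_t) \le \depth(\rho)\sig^2$, and that one should use the sharp tail bound $\Pr(|Z|\ge a)\le e^{-a^2/2}$ rather than the cruder $2e^{-a^2/2}$ in order for the constant inside the square root to come out exactly as in the statement.
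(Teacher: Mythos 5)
Your proof is correct and follows essentially the same route as the paper: decompose $W_t$ as a sum of at most $\depth(\rho)$ independent $\Ncal(0,\sigma^2)$ variables, bound the variance, apply the Gaussian tail bound $\Pr(|Z|\ge a)\le e^{-a^2/2}$, and take a union bound over $t\in[T]$. Your extra remarks on why the summand indices are distinct and on the provenance of the sharp tail constant are correct but were taken as understood in the paper.
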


\begin{proof}
For any $t\in[T]$, $W_t$ is 
normally distributed with zero mean and variance bounded by $\depth(\rho)
\sig^2$.  Since a standard Gaussian variable $Z$ satisfies $\Pr(\abs{Z} \ge z) \le \exp(-\frac{1}{2} z^2)$ for any $z \ge 0$,
we infer that
\begin{align*}
	\Pr\lr{\abs{W_t} \ge \sig \sqrt{2 \depth(\rho) \log\tfrac{T}{\delta}}}
	\le \exp\lr{- \log\tfrac{T}{\delta}}
	= \frac{\delta}{T}~~.
\end{align*}
The above holds for each $t \in [T]$ and the lemma follows from the union bound.
\end{proof}

\lemref{lem:depth} implies that the depth of $\rho$ and the variance
$\sigma^2$ determine how far the process $W_{1:T}$ will drift.  Since
we require a process that is bounded with high probability, we need to
minimize the depth of $\rho$.  (We could counter the effect of a deep
$\rho$ by setting $\sigma$ to be small, but if we do so, the resulting
process would not be able to mask the $\epsilon$ gap between the
losses of the different actions.) This consideration rules out the
simple Gaussian random walk, whose depth is $T$.

\begin{definition}[\emph{cut, width}]
Given a parent function $\rho$, define  
$$
\cut(t) ~=~ \set{s \in [T] : \rho(s) < t \le s}~,
$$
the set of rounds that are separated from their parent by $t$.
The width of $\rho$ is then defined\,\footnote{The width of $\rho$ coincides with the cut-width of the numbered graph it determines, see~\cite{CuSe89}.}  as $\width(\rho) = \max_{t \in [T]}|\cut(t)|$.
\end{definition}

Note that the cut size for any $s \in [T]$ is an integer between $1$
and $T$. One extreme is the simple Gaussian random walk ($\rho(t) = t-1$), whose cuts
are of size $1$. The other extreme is the sequence of
i.i.d. Gaussians ($\rho(t)=0$), for which
$|\cut(s)| = s$, and therefore $\width(\rho)=T$.








Our analysis in \secref{sec:infoBound} shows that any  
information that the player uncovers about the identity of the best
action can be attributed to a switch performed on the current round or
on a past round (where the first round is always considered to be a
switch). Moreover, we prove that the amount of information that can be
extracted from a switch at time $t$ is controlled by the size of
$\cut(t)$. Therefore, a process with a small width forces the player
to perform many switches.  This rules out the sequence of
i.i.d.~Gaussians, as it is too wide and reveals too much information
to a player that selects the same action repeatedly.

\subsection{The Multi-scale Random Walk} \label{sec:mmrw}

To prove our lower bound, we require a stochastic process that is
neither too deep nor too wide. We present such a process, called the
\emph{Multi-scale Random Walk} (MRW), whose depth and width
are both \emph{logarithmic} in $T$.
The MRW process is formed by the parent function given by
\begin{equation} \label{eq:rho}
	\rho(t) = t - 2^{\delta(t)} \,,
	\quad\text{where}\quad
	\delta(t) = \max\set{i \ge 0 : 2^i \text{ divides } t} \,.
\end{equation}
Put another way, $\rho(t)$ is obtained by taking the binary
representation of $t$, identifying the lowest order $1$, and flipping
it to $0$.  For example if $t=10110{\bf 1}00$ (which equals the
decimal number $180$) then $\rho(t) = 10110{\bf 0}00$ (which equals the
decimal number $176$).

\begin{figure}[t]
\begin{center}
\begin{tikzpicture}[scale=1.5]
\draw[darkgray] (-0.2,0) -- (7.2,0);
\foreach \x in {0,...,7}
   \draw[darkgray] (\x,-.1) -- (\x,.1);

\path[-latex,black] (0,0) edge[bend left=60] node[above]{$\xi_{4}$} (4,0);

\path[-latex,black] (0,0) edge[bend left=60] node[above,pos=0.6]{$\xi_{2}$} (2,0);
\path[-latex,black] (4,0) edge[bend left=60] node[above]{$\xi_{6}$} (6,0);

\path[-latex,black] (0,0) edge[bend left=40] node[above,pos=0.8]{$\xi_{1}$} (1,0);
\path[-latex,black] (2,0) edge[bend left=40] node[above]{$\xi_{3}$} (3,0);
\path[-latex,black] (4,0) edge[bend left=40] node[above,pos=0.8]{$\xi_{5}$} (5,0);
\path[-latex,black] (6,0) edge[bend left=40] node[above]{$\xi_{7}$} (7,0);

\node[darkgray] at (0,-.3) {$W_0$};
\node[darkgray] at (1,-.3) {$W_1$};
\node[darkgray] at (2,-.3) {$W_2$};
\node[darkgray] at (3,-.3) {$W_3$};
\node[darkgray] at (4,-.3) {$W_4$};
\node[darkgray] at (5,-.3) {$W_5$};
\node[darkgray] at (6,-.3) {$W_6$};
\node[darkgray] at (7,-.3) {$W_7$};

\draw[dashed] (0.5,-0.3) -- (0.5,1.1);
\node at (0.5,1.2){width = 3};

\end{tikzpicture}
\vskip 0.5cm
\tikzstyle{level 1}=[level distance=1cm, sibling distance=6cm]
\tikzstyle{level 2}=[level distance=1cm, sibling distance=3cm]
\tikzstyle{level 3}=[level distance=1cm, sibling distance=1.5cm]
\begin{tikzpicture}[grow=down,inner sep=0pt, minimum size=0.6cm]
\node[circle,draw]{}
    child{
        node[circle,draw] {}
            child {
                node[circle,draw] {}
                     child {
                       node[circle,draw](A) {}
                     }
                     child {
                       node[circle,draw](B) {}
                      edge from parent node[above=2pt,pos=1.1] {$\xi_1$};
                     }
            }
            child {
                node[circle,draw] {}
                     child {
                       node[circle,draw](C) {}
                     }
                     child {
                       node[circle,draw](D) {}
                      edge from parent node[above=2pt,pos=1.1] {$\xi_3$};
                     }
            edge from parent node[above=-1pt,pos=0.7] {$\xi_2$};
           }
    }
    child {
        node[circle,draw] {}
            child {
                node[circle,draw] {}
                     child {
                       node[circle,draw](E) {}
                     }
                     child {
                       node[circle,draw](F) {}
                      edge from parent node[above=2pt,pos=1.1] {$\xi_5$};
                     }
            }
            child {
                node[circle,draw] {}
                     child {
                       node[circle,draw](G) {}
                     }
                     child {
                       node[circle,draw](H) {}
                      edge from parent node[above=2pt,pos=1.1] {$\xi_7$};
                     }
            edge from parent node[above=-1pt,pos=0.7] {$\xi_6$};
            }
        edge from parent node[above=-2pt] {$\xi_4$};
    };
\node[darkgray,below=10pt] at (A) {$W_0$};
\node[darkgray,below=10pt] at (B) {$W_1$};
\node[darkgray,below=10pt] at (C) {$W_2$};
\node[darkgray,below=10pt] at (D) {$W_3$};
\node[darkgray,below=10pt] at (E) {$W_4$};
\node[darkgray,below=10pt] at (F) {$W_5$};
\node[darkgray,below=10pt] at (G) {$W_6$};
\node[darkgray,below=10pt] at (H) {$W_7$};
\end{tikzpicture}
\end{center}

\caption{An illustration of the MRW process for $T=7$. (Top) The MRW
  with a directed edge from $\rho(t)$ to $t$, for each
  $t\in[T]$. (Bottom) The MRW can be equivalently described as the
  values at the leaves of a binary tree, where the value at each
  leaf is obtained by summing the i.i.d.\ Gaussian variables $\xi_t$'s on the (right) edges
  along the path from the root. }
\label{fig:mrw}
\end{figure}

\figref{fig:mrw} depicts the MRW process for $T=7$.  Notice that the
process takes steps on \emph{multiple scales}, each of which
corresponds to a different power of two.  An alternative description
of the same process can be obtained by considering a binary tree with
leaves corresponding to the random variables $W_{1:T}$, as depicted in
\figref{fig:mrw}.  In this description, we associate the \emph{right}
edges of the tree, enumerated in a DFS traversal order, with the
Gaussian variables $\xi_{1:T}$.  Then, each $W_t$ is defined as the
sum of the $\xi_j$'s encountered along the path from the root to the
leaf corresponding to $W_t$.

We conclude the section with the following lemma, which summarizes the
properties of the MRW process used in our analysis.

\begin{lemma} \label{lem:mrw}
The depth and width of the MRW are both upper-bounded by $\floor{\log_2{T}}+1$.
\end{lemma}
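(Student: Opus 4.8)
The plan is to prove the two bounds separately, leaning on the binary-representation picture of $\rho$ recorded just above the lemma: $\rho$ acts on $t$ by locating the lowest-order $1$ in the binary expansion of $t$ and flipping it to $0$. Both arguments are then short counting arguments, one over ``number of $1$-bits'' and one over ``dyadic scales''.

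For the depth bound, the first observation is that flipping the lowest $1$-bit decreases the number of $1$-bits by exactly one (for every $t \ge 1$), so iterating $\rho$ simply erases the $1$-bits of $t$ one at a time and reaches $0$ after exactly $\nu(t)$ steps, where $\nu(t)$ denotes the number of $1$-bits of $t$. Since the iterates $t > \rho(t) > \rho(\rho(t)) > \cdots > 0$ are strictly decreasing they are distinct, so unwinding \eqref{eqn:ancestor} gives $\abs{\anc(t)} \le \nu(t)$. As $t \le T$ has at most $\floor{\log_2 T}+1$ binary digits, $\nu(t) \le \floor{\log_2 T}+1$, and maximizing over $t$ yields $\depth(\rho) \le \floor{\log_2 T}+1$.

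For the width bound, I would fix $t \in [T]$ and first rewrite $\cut(t)$: an index $s$ lies in $\cut(t)$ exactly when $s \ge t$ and $\rho(s) = s - 2^{\delta(s)} < t$, i.e.\ when $0 \le s - t < 2^{\delta(s)}$. I would then group the elements of $\cut(t)$ by the \emph{scale} $j = \delta(s)$. The indices at scale $j$ are precisely the integers of the form $2^j(2m+1)$, which form an arithmetic progression with common difference $2^{j+1}$; the defining inequality confines such an $s$ to the half-open interval $[t, t+2^j)$, whose length $2^j$ is strictly smaller than $2^{j+1}$, so at most one index of each scale $j$ can belong to $\cut(t)$. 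Finally, $\delta(s) = j$ forces $2^j \mid s$ and hence $2^j \le s \le T$, so the only scales that occur are $j \in \{0,1,\ldots,\floor{\log_2 T}\}$. Counting at most one index per scale gives $\abs{\cut(t)} \le \floor{\log_2 T}+1$, hence $\width(\rho) \le \floor{\log_2 T}+1$.

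Neither part is technically deep; the one step that actually carries the argument is the observation in the width bound that a dyadic window of width $2^j$ can contain at most one index whose $2$-adic valuation equals $j$. This is exactly the property that fails for the i.i.d.\ process (where a single window contains every index, forcing width $T$) and is what keeps the MRW narrow; conversely, the depth bound is what rules out the simple random walk. As a sanity check I would verify the small case $T=7$ against \figref{fig:mrw}, where indeed $\width = 3 = \floor{\log_2 7}+1$, realized by $\cut(1) = \{1,2,4\}$, and $\depth = 3$, realized by $\anc(7) = \{0,4,6\}$.
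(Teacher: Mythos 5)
Your proof is correct. The depth argument matches the paper's exactly: each application of $\rho$ clears one $1$-bit, so $\abs{\anc(t)}$ equals the number of $1$-bits of $t$, which is at most $\floor{\log_2 T}+1$. The width argument rests on the same key decomposition as the paper's --- group $\cut(t)$ by the scale $j = \delta(s)$ and show at most one element per scale --- but the counting step differs. You count all possible scales, noting $2^j \mid s \le T$ forces $j \in \{0,\ldots,\floor{\log_2 T}\}$, and get at most one per scale from the window-vs-spacing observation ($\cut(t)$ confines a scale-$j$ index to a window of length $2^j$, while such indices are $2^{j+1}$ apart). The paper instead proves that for each $s \in \cut(t) \setminus \{t\}$ of scale $j$, the $j$-th bit of $t$ must be $0$ and $s$ is pinned down as $2^{j+1}\floor{t/2^{j+1}}+2^j$; thus the non-$t$ elements of $\cut(t)$ inject into the zero-bit positions of $t$. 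This gives the slightly sharper per-$t$ bound $\abs{\cut(t)} \le (\text{number of } 0\text{-bits of } t)+1$, which together with $\abs{\anc(t)} = (\text{number of } 1\text{-bits of } t)$ yields $\abs{\anc(t)}+\abs{\cut(t)} \le \floor{\log_2 T}+2$ and lets both bounds be read off jointly from the binary expansion. Your version is marginally more self-contained --- no need to decode the bits of $t$ --- and proves the stated lemma equally well; the paper's version buys the tighter per-$t$ estimate and the pleasing complementarity between depth and width.
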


\begin{proof}
Let $n = \floor{\log_2{T}}+1$ and note that any integer $t \in [T]$
can be written using $n$ bits.  We shall prove that, for all $t \in
[T]$, the number $\abs{\anc(t)}$ is bounded by (in fact, is equal to) the
number of 1's in the $n$-digit binary representation of $t$, while
$\abs{\cut(t)}$ is bounded by the number of 0's in that representation
plus one.  This would immediately imply the lemma, as $\abs{\anc(t)}$ and
$\abs{\cut(t)}$ are both positive and their sum is at most $n+1$.

First, observe that the number of 1's in the representation of the
parent $\rho(t)$ is one less than the number of 1's in the
representation of $t$, and $\abs{\anc(0)} = 0$.  Hence, $\abs{\anc(t)}$
equals the number of 1's in the binary representation of $t$.

Moving on to the width, choose any $t \in [T]$ and consider the cut it defines.
We show that each $s \in \cut(t) \setminus \set{t}$ corresponds to a distinct zero in the $n$-bit binary representation of $t$.
Let~$s \in \cut(t) \setminus \set{t}$ and denote $j = \delta(s)$.
Note that $\rho(s) = s - 2^j$ is a multiple of $2^{j+1}$, so we can write $s - 2^j = a \cdot 2^{j+1}$ for some integer $a$.
By the definition of the cut and since $s \ne t$, we have $a \cdot 2^{j+1} < t < a \cdot 2^{j+1} + 2^j$.
Consequently,   $s = 2^{j+1} \cdot \floor{t/2^{j+1}} + 2^j$ and the coefficient of $2^j$ in the binary representation of $t$ is zero.
Together with the fact that $t \in \cut(t)$, we have shown that the size of the cut defined by $t$ is at most the number of zero bits in its binary representation plus one.
\end{proof}

\section{Analysis} \label{sec:analysis}

In this section, we prove our main result: a $\t\Omega(k^{1/3} T^{2/3})$ lower
bound on the expected regret of the multi-armed bandit with switching
costs, when the loss functions are stochastic and the player is
deterministic. Our result is stated formally in the following theorem.

\begin{theorem} \label{thm:regret-lb}
Let $L_{1:T}$ be the stochastic sequence of loss functions defined in
\figref{fig:adversary}.
Then for  $T \ge \max\set{k,6}$, the expected regret (as defined in
\eqref{eqn:minimax}) of any deterministic player against this sequence
is at least $k^{1/3} T^{2/3}/ (100 \log_2{T})$.
\end{theorem}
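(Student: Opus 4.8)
The plan is to combine an exact decomposition of the regret (once the clipping is removed) with an information‑theoretic bound showing that the player acquires information about the identity of the good arm $\chi$ only at a rate governed by the number of switches it makes, which is small because the MRW has logarithmic width.

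\emph{Removing clipping and decomposing.} First I would apply \lemref{lem:depth} with $\depth(\rho)\le\floor{\log_2 T}+1$ (from \lemref{lem:mrw}) and $\delta=T^{-3}$: on an event $G$ with $\Pr(G)\ge1-T^{-3}$ one has $\max_t|W_t|\le\sig\sqrt{2(\floor{\log_2 T}+1)\log(T^4)}$, which for $\sig=1/(9\log_2 T)$ and $T\ge6$ is strictly below $\shalf-\eps$ (this is exactly what the constant $9$ in the definitions of $\sig$ and $\eps$ buys). Hence on $G$ no clipping occurs and $L_t\equiv L'_t$; since $R\ge-T$ always while $\eps(T-N_\chi)+S\le 2T$, passing to $G$ costs only $O(T^{-2})$ in expectation, negligible against the target. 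On $G$ we have $\min_x\sumto tT L_t(x)=\sumto tT(W_t+\shalf)-\eps T$, attained at $x=\chi$, while the player's adversarial loss equals $\sumto tT(W_t+\shalf)-\eps N_\chi$ with $N_\chi=|\{t:X_t=\chi\}|$, so
\[
R \;=\; \eps\,(T-N_\chi)\;+\;S, \qquad S:=\sumto tT \ind{X_t\ne X_{t-1}} .
\]
It therefore suffices to show $\eps\,(T-\E[N_\chi])+\E[S]=\widetilde\Omega(k^{1/3}T^{2/3})$.

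\emph{The information bound (the heart of the proof).} Let $Y_t=L_t(X_t)$ be the feedback, let $\mu_a$ be the law of $Y_{1:T}$ conditioned on $\chi=a$, and let $\mu_0$ be the law in the reference world where every arm has loss $\clip(W_t+\shalf)$ (no distinguished arm), under which the deterministic player's trajectory is independent of $\chi$. The crucial structural facts, developed in \secref{sec:infoBound}, are: (i) conditioning on the past feedback $Y_{<t}$ pins down $W_{<t}$ and, because $\rho(t)<t$, also $W_{\rho(t)}$, so $W_t\mid Y_{<t}\sim N(W_{\rho(t)},\sig^2)$ --- the conditional variance never drops below $\sig^2$; (ii) in the KL chain rule, round $t$ contributes $\tfrac{\eps^2}{2\sig^2}$ to $\KL(\mu_a\|\mu_0)$ exactly when precisely one of $X_t,X_{\rho(t)}$ equals $a$, and $0$ otherwise --- along a tree edge $(\rho(t),t)$ that the player does not "cross" (plays $a$ at both endpoints, or at neither) the $\eps$‑shifts cancel and nothing leaks; (iii) any $t$ with $X_t\ne X_{\rho(t)}$ lies in $\cut(s)$ for the last switch $s\in(\rho(t),t]$, and $|\cut(s)|\le\width(\rho)\le\floor{\log_2 T}+1$ by \lemref{lem:mrw}, so the number of "informative" tree edges over the whole run is at most $\width(\rho)\cdot S$. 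Together these bound the player's total information about $\chi$ by $\widetilde O\!\big(\tfrac{\eps^2}{\sig^2}\,\width(\rho)\,\E[S]\big)$. Combining this with $\Pr[X_t=\chi]=\tfrac1k\sum_a\Pr_a[X_t=a]$ through total‑variation/Fano‑type comparisons of $\mu_a$ to $\mu_0$, carried out per round and per arm, yields the quantitative conclusion that pushing $\E[N_\chi]$ a constant factor above $T/k$ forces $\E[S]=\widetilde\Omega(k^{1/3}T^{2/3})$.

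\emph{Assembling the bound, and the main obstacle.} Finally I would split on $\E[S]$: if $\E[S]\ge k^{1/3}T^{2/3}/(100\log_2 T)$ we are done at once since $\E[R]\ge\E[S]$; otherwise $\E[S]$ is small, the information bound keeps $\E[N_\chi]$ bounded away from $T$ by a constant factor, and $\eps\,(T-\E[N_\chi])$ alone is $\widetilde\Omega(\eps T)=\widetilde\Omega(k^{1/3}T^{2/3})$ after substituting $\eps=k^{1/3}T^{-1/3}/(9\log_2 T)$, $\sig=1/(9\log_2 T)$ --- values chosen so that $\eps/\sig=k^{1/3}T^{-1/3}$ and $\sig\,\width(\rho)=O(1)$, the balance at which the exploration cost $\eps(T-N_\chi)$ and the switching cost $S$ meet at order $T^{2/3}$. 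Tracking constants, and using $T\ge\max\{k,6\}$ (which also guarantees $\eps\le1$ so the losses lie in $[0,1]$), gives $k^{1/3}T^{2/3}/(100\log_2 T)$. I expect the real work to be step (iii) above together with the translation of the information bound into an upper bound on $\E[N_\chi]$ with the \emph{right} dependence on $k$: a crude appeal to Pinsker or Fano loses polynomial factors in $k$, so one must use the logarithmic width of the MRW, the non‑shrinking conditional variance $\sig^2$, and the cancellation of $\eps$‑shifts along uncrossed tree edges in concert. The rest (the clipping estimate via \lemref{lem:depth}, the regret identity, and the final optimization over $\E[S]$) is comparatively routine.
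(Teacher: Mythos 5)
Your high-level architecture matches the paper's: de-clip using \lemref{lem:depth} with the $\floor{\log_2 T}+1$ depth bound, rewrite the unclipped regret as $\eps(T-N_\chi)+M$, bound the player's information gain by a KL chain rule over tree edges with per-edge contribution $\eps^2/2\sig^2$ and total count $\le \width(\rho)\cdot(\text{number of switches})$, and then trade off the exploration term $\eps(T-\E[N_\chi])$ against $\E[M]$. Items (i)--(iii) of your ``heart of the proof'' are exactly the content of the paper's \lemref{lem:tv-bound}.

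The gap is in the final assembly, precisely at the spot you flag as ``the real work.'' Your plan to ``split on $\E[S]$'' conflates two different expectations: the information bound controls $\tv{\Q_0}{\Q_i}$ in terms of $\E_{\Q_0}[M_i]$ (expectation under the reference measure $\Q_0$ where no arm is distinguished), whereas the regret decomposition involves $\E[M]$ under the true mixture measure. These are not interchangeable, and the change of measure between them is itself governed by the very TV distance you are trying to bound. The paper resolves this by (a) averaging the per-arm bound using $\sum_{i}M_i = 2M$ under the single measure $\Q_0$, which is what produces the essential $1/\sqrt{k}$ factor in \corref{cor:tv-bound} --- averaging the $\E_{\Q_i}[M_i]$ version of the bound instead, as a naive mixture-measure argument would, loses a factor of $k$ and cannot recover the claimed $k^{1/3}$; and (b) proving the change-of-measure inequality $\E_{\Q_0}[M]-\E[M]\le \eps T\cdot\frac1k\sum_i\tv{\Q_0}{\Q_i}$, which is only valid when $M\le \eps T$ almost surely, hence the truncation device of modifying the player to stop switching after $\floor{\eps T}$ switches (costing at most a factor of $2$ in expected regret). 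Once this is in place, the paper does not do a hard case split but a quadratic optimization over $m=\sqrt{\E_{\Q_0}[M]}$, obtaining $\E[R]\ge \eps T/3 + m(m - 2\eps^2 T\sqrt{\log_2 T}/(\sig\sqrt k))$, whose minimum yields the stated constant. Your proposal correctly anticipates that the $k$-dependence is where Pinsker/Fano applied crudely would fail, but does not identify the two mechanisms (the $\Q_0$-averaging and the $\eps T$-truncation) that actually make it work; as written, the argument does not close.

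\newcommand{\corref}[1]{Corollary~\ref{#1}}
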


Our analysis requires some new notation. First, let $M = \sum_{t=1}^T
\ind{X_t \neq X_{t-1}}$ be the number of switches in the action
sequence $X_{1:T}$ (recall that we arbitrarily set $X_0=1$). Also, for
all $t\in[T]$, let $Z_t = L_t(X_t)$ be the loss observed by the player
on round $t$. Recall our assumption that $X_t$, the player's action on round
$t$,  is a deterministic function of his past observations
$Z_{1:(t-1)}$.

\subsection{Distinguishability Requires Switching}
\label{sec:infoBound}

We begin the analysis with a key lemma that relates the player's
ability to identify the best action to the number of switches he
performs. This lemma also highlights the importance of finding a
stochastic process with a small $\width(\rho)$.  
The lemma bounds the distance between each one of the conditional probability measures
$$
\Q_i(\cdot) = \Pr(\cdot \,|\, \chi=i) ~, \qquad i = 1,2,\ldots,k~,
$$
and the probability measure $\Q_0$ that corresponds to an (imaginary) adversary that uses $\chi = 0$.
%
Thus $\Q_0(\cdot)$ is the probability when all actions incur the same loss.
Let $\Fcal$ be the
$\sigma$-algebra  generated
by the player's observations $Z_{1:T}$. Then the \emph{total
  variation} distance between $\Q_0$ and $\Q_i$ on
$\Fcal$ is defined as
$$
\tv{\Q_0}{\Q_i} ~=~
\sup_{A \in \Fcal} \big|\Q_0(A) - \Q_i(A) \big| ~~.
$$
This distance captures the player's ability to identify whether action $i$ is better than or equivalent to the other actions based on the loss values he observes.
The following lemma upper-bounds this distance in terms of the number of switches the player performs to or from action $i$, denoted by the random variable $M_i$, and the width $\width(\rho)$ of the underlying stochastic process. 
Here we use the notation $\E_{\Q_j}$ to refer to the expectation with respect to the distribution $\Q_j$, for any $j=0,1,\ldots,k$.

\begin{lemma} \label{lem:tv-bound} For all $i \in [k]$, it holds that
$d_\mathrm{TV}^{\Fcal}(\Q_0,\Q_i) \le (\eps/2\sig) \sqrt{\width(\rho) \, \E_{\Q_0}[M_i]}$ and 
$d_\mathrm{TV}^{\Fcal}(\Q_0,\Q_i) \le (\eps/2\sig) \sqrt{\width(\rho) \, \E_{\Q_i}[M_i]}$.
\end{lemma}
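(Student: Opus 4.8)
The plan is to bound the total variation distance via Pinsker's inequality, reducing the task to controlling the KL divergence $\KL(\Q_0 \| \Q_i)$, and then to compute that KL divergence by exploiting the chain rule together with the specific structure of the loss sequence and the MRW process. Since the player is deterministic, his action $X_t$ is a fixed function of past observations $Z_{1:t-1}$, so the law of $Z_{1:T}$ under any $\Q_j$ is determined by the conditional laws $\Q_j(Z_t \mid Z_{1:t-1})$. By the chain rule for relative entropy,
\[
\KL(\Q_0 \| \Q_i) = \sum_{t=1}^T \E_{\Q_0}\big[ \KL\big( \Q_0(Z_t \mid Z_{1:t-1}) \,\big\|\, \Q_i(Z_t \mid Z_{1:t-1}) \big) \big].
\]
Under $\Q_0$ all actions have loss $W_t + \half$ (clipped), while under $\Q_i$ the loss of action $i$ is shifted down by $\eps$; so on round $t$ the two conditional laws of $Z_t$ differ only when $X_t = i$. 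The key structural observation is that, conditioned on the player's past observations, the relevant randomness in $Z_t$ comes from $W_t = W_{\rho(t)} + \xi_t$, and $Z_{1:t-1}$ already reveals a great deal about $W_{\rho(t)}$ whenever $\rho(t)$ is a round on which the player observed action $i$ and did not switch — in that case essentially no new information is gained. More precisely, I expect the increment $\xi_t$ (a fresh $\sig^2$-variance Gaussian) to be the source of novelty, and a round $t$ contributes to the KL divergence only when the edge from $\rho(t)$ to $t$ "crosses a switch", i.e. when the player's behavior on action $i$ between rounds $\rho(t)$ and $t$ is not constant.

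The main technical step is therefore to show that the per-round KL contribution, summed over $t$, is bounded by $(\eps^2/\sig^2)$ times the number of pairs $(t,\rho(t))$ that straddle a switch to or from action $i$, and then to argue that this count is at most $\width(\rho) \cdot M_i$. The width enters because, for any fixed round $s$ on which a switch occurs, the number of $t$ with $\rho(t) < s \le t$ is exactly $|\cut(s)| \le \width(\rho)$ — so each of the $M_i$ switches can be "charged" by at most $\width(\rho)$ bad edges. Combining, $\KL(\Q_0 \| \Q_i) \le (\eps^2/\sig^2)\,\width(\rho)\,\E_{\Q_0}[M_i]$ (and the symmetric bound with $\Q_i$ in place of $\Q_0$, obtained by running the same argument with the roles reversed, using that a Gaussian shift of $\eps$ contributes KL exactly $\eps^2/2\sig^2$ and that clipping only decreases it by the data-processing inequality). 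Pinsker's inequality $\tv{\Q_0}{\Q_i} \le \sqrt{\shalf \KL(\Q_0\|\Q_i)}$ then yields the claimed $(\eps/2\sig)\sqrt{\width(\rho)\,\E_{\Q_0}[M_i]}$.

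The hard part will be making the informal statement "$Z_{1:t-1}$ reveals $W_{\rho(t)}$, so only $\xi_t$ is new" fully rigorous, since in general $Z_{1:t-1}$ does not pin down $W_{\rho(t)}$ exactly — the player may have been playing other actions, and clipping destroys information. The clean way to handle this is to introduce an auxiliary, more informative observation model in which the player additionally sees $W_t$ directly (or sees the unclipped losses), prove the bound there, and then invoke the data-processing inequality to transfer it back to the real model where the player sees only $Z_t$. In the more informative model, conditioned on $W_{\rho(t)}$ the law of $W_t$ is $\Ncal(W_{\rho(t)}, \sig^2)$ under $\Q_0$ and $\Ncal(W_{\rho(t)}-\eps, \sig^2)$ under $\Q_i$ exactly when $X_t = i$, giving a per-round KL of exactly $\eps^2/2\sig^2$ on those rounds — but I will need to be careful that "$X_t = i$" is itself a random event depending on the observations, which is why the expectation $\E_{\Q_0}[\cdot]$ appears and why one ultimately gets $\E_{\Q_0}[M_i]$ rather than a deterministic count. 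I also need to verify the combinatorial claim that the number of rounds $t$ whose parent edge crosses a switch is at most $\width(\rho) \cdot M_i$, which follows by summing $|\cut(s)|$ over the (at most $M_i$) switch rounds $s$.
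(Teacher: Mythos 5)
Your overall approach --- Pinsker's inequality to reduce total variation to KL, the chain rule for relative entropy on an unclipped observation model, localizing the per-round KL contribution to rounds $t$ whose parent-edge $\rho(t)\to t$ straddles a switch to or from action $i$, charging each such edge to one of the $M_i$ switch rounds $s$ and bounding the charge per switch by $|\cut(s)|\le \width(\rho)$, and finally transferring back to the clipped model by data processing --- is precisely the structure of the paper's proof. You also correctly flag the crux: making rigorous the claim that only the fresh increment $\xi_t$ carries new information. However, the specific way you propose to make it rigorous would fail, and this is exactly where the argument is delicate.

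You write that ``conditioned on $W_{\rho(t)}$ the law of $W_t$ is $N(W_{\rho(t)},\sig^2)$ under $\Q_0$ and $N(W_{\rho(t)}-\eps,\sig^2)$ under $\Q_i$ exactly when $X_t=i$.'' Two problems. First, the latent process $W_t$ has the same law under $\Q_0$ and $\Q_i$; it is the \emph{observation} $Y_t = L'_t(X_t) = W_t + \tfrac{1}{2} - \eps\,\ind{X_t=\chi}$ whose conditional law differs. Second and more importantly, if you condition on the latent $W_{\rho(t)}$ then the per-round KL is $\frac{\eps^2}{2\sig^2}\ind{X_t=i}$, which sums over $t$ to $\frac{\eps^2}{2\sig^2}N_i$, where $N_i$ is the number of pulls of arm $i$ --- generically $\Theta(T)$ --- and your width/switch charging no longer applies. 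The lemma is simply false with $N_i$ in place of $M_i$. The correct move, and what the paper does, is to condition on the \emph{observed} $Y_{\rho(t)}$, not the latent $W_{\rho(t)}$. Under $\Q_i$ one has $Y_t = Y_{\rho(t)} + \xi_t - \eps\bigl(\ind{X_t=i}-\ind{X_{\rho(t)}=i}\bigr)$, so the mean shift cancels whenever $X_t$ and $X_{\rho(t)}$ agree on whether they equal $i$, yielding a per-round KL of $\frac{\eps^2}{2\sig^2}\ind{A_t}$ with $A_t=\set{X_t=i, X_{\rho(t)}\ne i}\cup\set{X_t\ne i, X_{\rho(t)}=i}$. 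This is what localizes the contribution to switch-crossing edges and makes your cut/width argument go through. (A minor slip as well: you state $\KL(\Q_0\|\Q_i)\le(\eps^2/\sig^2)\,\width(\rho)\,\E_{\Q_0}[M_i]$, losing a factor of $\tfrac{1}{2}$; the bound $\frac{\eps^2}{2\sig^2}\,\width(\rho)\,\E_{\Q_0}[M_i]$ is what Pinsker needs to recover the stated constant $\eps/2\sig$.)
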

To see the significance of this lemma, consider first the case $k=2$, where $M_1=M_2=M$ by definition.
By the triangle inequality, $\tv{\Q_1}{\Q_2} \le \tv{\Q_0}{\Q_1} + \tv{\Q_0}{\Q_2}$.
Concavity of square root yields 
$$
	\sqrt{\E_{\Q_1}[M]} + \sqrt{\E_{\Q_2}[M]} 
	\le \sqrt{2\,(\E_{\Q_1}[M]+ \E_{\Q_2}[M])} 
	= 2 \sqrt{\E[M]}~.
$$
The second claim of \lemref{lem:tv-bound} for $k=2$ now implies that $\tv{\Q_1}{\Q_2} \le (\eps/\sig) \sqrt{\width(\rho) \, \E[M]}$.
This inequality clarifies the dilemma facing the player: If he switches actions frequently so that  $\E[M]=\Omega(T^{2/3}/\log(T))$,
the switching costs guarantee the desired lower bound on regret. Otherwise,  $\E[M]=o(T^{2/3}/\log(T))$ ; since $\eps/\sig=\Theta(T^{-1/3})$ and 
$\width(\rho)=\Theta(\log(T))$, the distance
$d_\mathrm{TV}^{\Fcal}(\Q_1,\Q_2)$ will tend to zero with $T$, so the player will be unable to distinguish between the two actions and will suffer an expected regret of order 
$\Theta(\eps T)=\Theta(T^{2/3}/\log(T))$. We do not formalize this argument here, since we prove the lower bound for any $k$ below.


\begin{proof}[Proof of Lemma~\ref{lem:tv-bound}]
Let $Y_0=\half$ and $Y_t = L'(X_t)$ for all $t\in[T]$. Note that $X_t$ is a
deterministic function of $Y_{0:(t-1)}$. Define $Y_S = \{Y_t\}_{t \in
  S}$ and let $\D(Y_S \mid Y_{S'})$ be the relative entropy
(i.e., the Kullback-Leibler divergence) between the joint distribution of $Y_S$,
conditioned on $Y_{S'}$, under $\Q_0$ and $\Q_i$. Namely,
\begin{align}
\D(Y_S \mid Y_{S'})
&~=~ \E_{\Q_0}\left[\log \frac{\Q_0(Y_S \mid Y_{S'})}{\Q_i(Y_S \mid Y_{S'})} \right]~.
\end{align}
For brevity, also define $\D(Y_S) = \D(Y_S \mid \emptyset)$.
We use the chain rule for relative entropy (see, e.g., Theorem 2.5.3 in \cite{cover2006elements}) to decompose $\D(Y_{0:T})$ as
\begin{equation} \label{eq:klchain}
\D(Y_{0:T}) ~=~ \D(Y_0) + \sum_{t=1}^T \D\big(Y_t \mid Y_{\anc(t)}\big)
\end{equation}
and deal separately with each term in the sum. First note that
$\D(Y_0)=0$ as $Y_0$ is a constant.  The value of $\D\big(Y_t \mid
Y_{\anc(t)}\big)$ is computed by considering three separate cases. If
$X_t = X_{\rho(t)}$ (i.e., the player chooses the same action on
rounds $t$ and $\rho(t)$) then the distribution of $Y_t$ conditioned
on $Y_{\anc(t)}$ is $N(Y_{\rho(t)}, \sig^2)$ under both $\Q_0$ and
$\Q_i$, where $N(\mu,\sig^2)$ denotes the normal distribution with
mean $\mu$ and variance $\sig^2$.
If $X_t = i$ and $X_{\rho(t)} \ne i$ then the distribution of
$Y_{t}$ conditioned on $Y_{\anc(t)}$ is $N(Y_{\rho(t)},\sig^2)$
under $\Q_0$ and $N(Y_{\rho(t)}-\eps,\sig^2)$ under $\Q_i$. Finally,
if $X_t \ne i$ and $X_{\rho(t)} = i$ then the distribution of $Y_{t}$
conditioned on $Y_{\anc(t)}$ is $N(Y_{\rho(t)},\sig^2)$ under $\Q_0$
and $N(Y_{\rho(t)}+\eps,\sig^2)$ under $\Q_i$. Overall,
\begin{align} \label{eq:kl2}
	\D\big(Y_t \mid Y_{\anc(t)}\big)
	&= \Q_0\left(X_t = i, X_{\rho(t)} \ne i\right) \cdot \KL \left(N(0,\sig^2) \,\big\|\, N(-\eps,\sig^2) \right) \non\\
		&\phantom{=}+ \Q_0\left(X_t \ne i, X_{\rho(t)} = i\right) \cdot \KL \left(N(0,\sig^2) \,\big\|\, N(\eps,\sig^2) \right) \non\\
	&= \frac{\eps^2}{2\sig^2} \, \Q_0 (A_t) ~ ,
\end{align}
where $A_t = \set{X_t = i, X_{\rho(t)} \ne i \;\vee\; X_t \ne i, X_{\rho(t)} = i}$ is the event that the player switched an odd number of times (and in particular, at least once) from or to action $i$ between rounds $\rho(t)$ and $t$.
Substituting \eqref{eq:kl2} into \eqref{eq:klchain}
gives
\begin{align} \label{eq:klsum}
  \D(Y_{0:T})
	&~=~ \frac{\eps^2}{2\sig^2} \, \sum_{t=1}^T \Q_0(A_t)
	~=~ \frac{\eps^2}{2\sig^2} \, \E_{\Q_0}\left[ \sum_{t=1}^T \ind{A_t} \right] \,.
\end{align}
The event $A_t$ implies that there exists at least one time $s$ of switch from or to action $i$, such that $t \in \cut(s)$.
Therefore, if we let $S_{1:M_i}$ denote
the random sequence of times of such switches (in the action sequence $X_{1:T}$), then
\begin{align*}
	\sum_{t=1}^T \ind{A_t}
	~\le~ \sum_{r=1}^{M_i} \sum_{t \in \cut(S_r)} \ind{A_t}
	~\le~ \sum_{r=1}^{M_i} \abs{\cut(S_r)}
	~\le~ \width(\rho)\,M_i ~.
\end{align*}
Plugging this inequality back into \eqref{eq:klsum} gives
$$
\D(Y_{0:T}) ~\leq~ \frac{\eps^2 \width(\rho)}{2\sig^2} \, \E_{\Q_0}[M_i] ~.
$$
Pinsker's inequality (Lemma 11.6.1 in \cite{cover2006elements}) now implies that
$$
\sup_{A \in \Fcal'} \big(\Q_0(A) - \Q_i(A) \big)
~\le~ \frac{\eps}{2\sig} \sqrt{\width(\rho) \, \E_{\Q_0}[M_i]} ~,
$$
where $\Fcal'$ is the $\sigma$-algebra generated by $Y_{0:T}$. We can
replace $\Fcal'$ with $\Fcal$ above to obtain $\tv{\Q_0}{\Q_i}$ in the left-hand side,
simply because $Z_{1:T}$ is a deterministic function of $Y_{0:T}$ and therefore $\Fcal \subset \Fcal'$.

This proves the first claim of the lemma.
To prove the second bound, we can simply reverse the roles of $\Q_0$ and $\Q_i$ in our arguments above and obtain the same bound over the total variation distance but in terms of the expectation with respect to the distribution $\Q_i$.
\end{proof}

\subsection{Regret Lower Bound}\label{app:lower}

With Lemma~\ref{lem:tv-bound} in hand, we can prove \thmref{thm:regret-lb} and conclude \thmref{thm:main}. 
We begin with a simple corollary of the lemma.

\begin{corollary} \label{cor:tv-bound}
It holds that
$
	\frac{1}{k} \sum_{i=1}^{k} \tv{\Q_0}{\Q_i}
	\le \frac{\eps}{\sig \sqrt{2k}} \cdot \sqrt{\width(\rho) \, \E_{\Q_0}[M]} .
$
\end{corollary}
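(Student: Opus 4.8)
The plan is to derive Corollary~\ref{cor:tv-bound} directly from Lemma~\ref{lem:tv-bound} by averaging the second bound over $i \in [k]$ and controlling the resulting sum of square roots. Concretely, the second claim of Lemma~\ref{lem:tv-bound} gives, for each $i$,
$$
\tv{\Q_0}{\Q_i} \le \frac{\eps}{2\sig} \sqrt{\width(\rho)\,\E_{\Q_i}[M_i]}~.
$$
Summing over $i$ and dividing by $k$,
$$
\frac{1}{k}\sum_{i=1}^k \tv{\Q_0}{\Q_i} \le \frac{\eps}{2\sig}\sqrt{\width(\rho)}\cdot \frac{1}{k}\sum_{i=1}^k \sqrt{\E_{\Q_i}[M_i]}~.
$$
So the task reduces to bounding $\frac{1}{k}\sum_i \sqrt{\E_{\Q_i}[M_i]}$.

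The first step in handling this is to apply the Cauchy--Schwarz (or concavity-of-square-root / power-mean) inequality to pull the sum inside the square root:
$$
\frac{1}{k}\sum_{i=1}^k \sqrt{\E_{\Q_i}[M_i]} \le \sqrt{\frac{1}{k}\sum_{i=1}^k \E_{\Q_i}[M_i]}~.
$$
The second step is to relate $\sum_i \E_{\Q_i}[M_i]$ to $\E_{\Q_0}[M]$. Here I would note two things. First, since $\chi$ is chosen uniformly over $[k]$ in the construction of Figure~\ref{fig:adversary}, the marginal law of $Z_{1:T}$ (and hence of the deterministic functionals $M, M_i$) under the full adversary equals $\frac{1}{k}\sum_i \Q_i$; but what we actually want is to swap each $\Q_i$-expectation for a $\Q_0$-expectation. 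The clean way is to first invoke the \emph{first} claim of Lemma~\ref{lem:tv-bound} instead — it already gives $\tv{\Q_0}{\Q_i} \le \frac{\eps}{2\sig}\sqrt{\width(\rho)\,\E_{\Q_0}[M_i]}$ with the expectation under $\Q_0$ directly — and then use the deterministic identity $\sum_{i=1}^k M_i \le 2M$, which holds because every switch in $X_{1:T}$ changes the played action from some value to another, hence is counted in at most two of the $M_i$'s (the $M_i$ for the old action and the $M_i$ for the new action). Taking $\E_{\Q_0}$ of this identity gives $\sum_i \E_{\Q_0}[M_i] \le 2\,\E_{\Q_0}[M]$.

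Chaining these: $\frac{1}{k}\sum_i \tv{\Q_0}{\Q_i} \le \frac{\eps}{2\sig}\sqrt{\width(\rho)}\cdot\sqrt{\frac{1}{k}\sum_i \E_{\Q_0}[M_i]} \le \frac{\eps}{2\sig}\sqrt{\width(\rho)}\cdot\sqrt{\frac{2}{k}\E_{\Q_0}[M]} = \frac{\eps}{\sig\sqrt{2k}}\sqrt{\width(\rho)\,\E_{\Q_0}[M]}$, which is exactly the claimed bound. The only genuinely nonroutine point — and the one I would state carefully — is the combinatorial inequality $\sum_{i=1}^k M_i \le 2M$: one must be careful about whether a switch "from $i$ to $j$" is double-counted and about the boundary convention $X_0 = 1$ (the initial pseudo-switch contributes to $M_1$ and to $M_{X_1}$, still at most two terms, so the bound is unaffected). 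Everything else is Cauchy--Schwarz and arithmetic. I expect no real obstacle here; the corollary is essentially a bookkeeping consequence of the lemma, and the main care is just in getting the constant $\sqrt{2}$ right via the $\sum_i M_i \le 2M$ step together with the power-mean inequality.
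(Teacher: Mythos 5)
Your proposal is correct and follows essentially the same route as the paper's proof: average the \emph{first} bound of Lemma~\ref{lem:tv-bound} over $i$, apply concavity of the square root, and use the fact that each switch is counted (exactly) twice in $\sum_i M_i$, so $\sum_i M_i = 2M$. The brief detour through the second bound and the $\le$ vs.\ $=$ in the switch-counting identity are immaterial; the argument matches the paper's.
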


\begin{proof}
Averaging the inequalities of \lemref{lem:tv-bound} over
$i=1,2,\ldots,k$, using the concavity of the root function and noting
that $\sum_{i=1}^{k} M_i = 2 M$ (as each switch is counted twice in
the sum) yields
$$
\frac{1}{k} \sum_{i=1}^{k} \tv{\Q_0}{\Q_i}
~\le~ \frac{\eps}{2\sig} \cdot \frac{1}{k} \sum_{i=1}^{k} \sqrt{\width(\rho) \, \E_{\Q_0}[M_i]}
~\le~ \frac{\eps}{\sig \sqrt{2k}} \cdot \sqrt{\width(\rho) \, \E_{\Q_0}[M]} ~,
$$
as claimed.
\end{proof}

We now turn to analyzing the player's expected regret.
Using the definitions above, this regret can be written as
$$
R ~=~ \sum_{t=1}^T L_t(X_t) + M - \min_{x\in[k]}\sum_{t=1}^{T} L_t(x)~.
$$
As a tool in our analysis, we also define the hypothetical regret
with respect to the unclipped loss functions $L_{1:T}'$ that the
player would suffer on the \emph{same action sequence} $X_{1:T}$. Namely,
$$
R' = \sum_{t=1}^{T} L_t'(X_t) + M - \min_{x\in[k]}\sum_{t=1}^{T} L'_t(x)~.
$$

The next lemma shows that in expectation, the regret $R$ can be lower
bounded in terms of $R'$.   

\begin{lemma} \label{lem:regtilde}
Assume that $T \ge \max\set{k, 6}$.
Then~
$
	\E[R] \ge \E[R'] - \eps T/6 \,.
$
\end{lemma}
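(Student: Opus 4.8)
The plan is to bound the gap $\E[R'] - \E[R]$ by controlling, in expectation, the total ``clipping mass'' $\sum_t |L'_t(X_t) - L_t(X_t)|$ together with the analogous quantity for the comparator term $\min_x \sum_t L'_t(x)$ versus $\min_x \sum_t L_t(x)$. Writing $R' - R = \sum_t (L'_t(X_t) - L_t(X_t)) - \big(\min_x \sum_t L'_t(x) - \min_x \sum_t L_t(x)\big)$, I would first note that clipping only pushes values toward $[0,1]$, and in particular the map $\alpha \mapsto \clip(\alpha)$ satisfies $L_t(x) \le L'_t(x)$ whenever $L'_t(x) \ge 0$ and $L_t(x) \ge L'_t(x)$ only when $L'_t(x) < 0$; so the $\min$ over $x$ of the clipped losses is at least the $\min$ of the unclipped ones minus the clipping correction on whichever action achieves the unclipped minimum. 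The cleanest route is: for the first sum, $|L'_t(X_t) - L_t(X_t)| \le |L'_t(X_t) - \tfrac12| \cdot \ind{|L'_t(X_t) - \tfrac12| > \tfrac12}$, and since $L'_t(x) = W_t + \tfrac12 - \eps\ind{\chi=x}$, this is at most $(|W_t| + \eps)\ind{|W_t| + \eps > \tfrac12}$; and for the comparator term a similar bound holds with $x$ ranging over $[k]$, but since the $W_t$ shift is common to all actions, the clipping discrepancy for $\min_x \sum_t L_t(x)$ is again controlled by $\sum_t (|W_t|+\eps)\ind{\cdots}$.

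Hence the whole thing reduces to showing $\E\big[\sum_{t=1}^T (|W_t| + \eps)\,\ind{|W_t| + \eps > 1/2}\big] \le \eps T/6$ (up to constants I would track carefully so the final $\eps T/6$ comes out). Here is where I invoke the structure of the MRW: by \lemref{lem:mrw}, $\depth(\rho) \le \floor{\log_2 T} + 1$, and each $W_t$ is Gaussian with variance at most $\depth(\rho)\,\sig^2 \le (\floor{\log_2 T}+1)\sig^2$. With $\sig = 1/(9\log_2 T)$ as set in \figref{fig:adversary}, this variance is $O(1/\log_2 T)$, which is tiny; so $|W_t|$ exceeds, say, $1/2 - \eps \ge 1/3$ (using $\eps \le \sig$ and $T$ large) only with probability exponentially small in $\log T$, i.e.\ polynomially small in $T$. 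Concretely, using $\Pr(|Z| \ge z) \le e^{-z^2/2}$ as in the proof of \lemref{lem:depth}, $\Pr(|W_t| > 1/3) \le \exp(-c \log_2^2 T)$ for an absolute constant $c$; summing over $t \in [T]$ and using that on that event $|W_t| + \eps \le |W_t| + 1$ while $\E[(|W_t|+1)\ind{|W_t|>1/3}]$ is dominated by a Gaussian tail integral of the same exponentially-small order, the total contribution is $o(\eps)$, certainly $\le \eps T / 6$ for $T \ge \max\{k,6\}$. I should double-check the small-$T$ regime ($T$ just above $6$) directly, since the asymptotic tail bound may need the explicit constants; the condition $T \ge 6$ in the statement is presumably exactly what makes $9\log_2 T$ large enough for the bound to go through with the stated constant.

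The main obstacle I anticipate is bookkeeping rather than conceptual: making the comparator-term clipping discrepancy rigorous. The subtlety is that $\min_x \sum_t L'_t(x)$ and $\min_x \sum_t L_t(x)$ may be achieved by different actions, so I cannot just compare term by term; the right move is the one-sided inequality $\min_x \sum_t L_t(x) \le \sum_t L_t(\chi) \le \sum_t L'_t(\chi)$ combined with $\min_x \sum_t L_t(x) \ge \min_x \sum_t L'_t(x) - \sum_t \max_x (L'_t(x) - L_t(x))$, and then bound $\max_x(L'_t(x)-L_t(x))$ pointwise by the same $(|W_t|+\eps)\ind{|W_t|+\eps>1/2}$ quantity (the best action only lowers $L'$, so the worst clipping loss across actions is still governed by $W_t$). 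Once both one-sided bounds are in place, combining them yields $\E[R] \ge \E[R'] - \E\big[\sum_t \text{(clipping mass)}\big] \ge \E[R'] - \eps T/6$, and the Gaussian tail computation above closes it. I would present the clipping-mass bound as an intermediate inequality, then the tail estimate, then assemble.
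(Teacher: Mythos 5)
Your route is genuinely different from the paper's, and after some repair it can be made to work, but the paper's argument is noticeably cleaner; I'll explain both differences and the issues.

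The paper does not attempt to integrate the expected ``clipping mass.'' Instead it defines the event $B = \{\forall t : L_t = L'_t\}$ (no clipping anywhere), and uses two facts: (i) on $B$ trivially $R=R'$; (ii) \emph{unconditionally} $M \le R \le R' \le M + \eps T$, so $R'-R\le \eps T$ always. Fact (ii) follows from $R' = \eps(T-N_\chi)+M$ and from the observation that $\chi$ pointwise minimizes both $L'_t$ and $L_t$, so $\sum_t L_t(X_t)\ge\min_x\sum_t L_t(x)$, hence $R\ge M$. Then $\E[R'-R]=\E[R'-R\mid \neg B]\Pr(\neg B)\le \eps T\cdot\Pr(\neg B)$, and Lemma~\ref{lem:depth} with $\delta=1/T\le 1/6$ gives $\Pr(\neg B)\le 1/6$ once one checks that $3\sigma\log_2 T = 1/3$ and $\eps<1/6$ under $T\ge\max\{k,6\}$. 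The crude a.s.\ bound $R'-R\le\eps T$ plus a $1/6$ probability of any clipping yields $\eps T/6$ immediately, with no tail integral and no constant-tracking.

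Your decomposition into per-round clipping defects plus the comparator correction is sound (and your observation that $\min_x\sum_t L_t(x)$ is achieved at $x=\chi$ makes the comparator term easy to control by the same clipping-mass quantity), but you then have to show $\E\bigl[\sum_t(|W_t|+\eps)\ind{|W_t|+\eps>1/2}\bigr]$ is small, which requires an explicit Gaussian-tail integral with carefully tracked constants. One concrete error: you assert $\Pr(|W_t|>1/3)\le\exp(-c\log_2^2 T)$, but the variance of $W_t$ is at most $\depth(\rho)\sigma^2 \le (2\log_2 T)/(81\log_2^2 T) = 2/(81\log_2 T)$, which is $\Theta(1/\log_2 T)$, not $\Theta(1/\log_2^2 T)$. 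The tail is therefore $\Pr(|W_t|>1/3)\lesssim \exp\bigl(-\tfrac{9}{4}\log_2 T\bigr) = T^{-9/(4\ln 2)}$, i.e.\ polynomially small in $T$ (around $T^{-3.25}$), not superpolynomially small. This is still more than enough to close your argument after summing over $t$ and comparing to $\eps T/6 = \Theta(k^{1/3}T^{2/3}/\log_2 T)$, so the final conclusion survives, but the intermediate claim as written is wrong. You also correctly flag that the small-$T$ regime needs an explicit check --- the paper sidesteps that entirely because its unconditional bound $R'-R\le\eps T$ has no constant to track, and the $1/6$ comes directly from $\delta=1/T\le 1/6$. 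If you pursue your route, fix the exponent and be explicit that a polynomial tail suffices; if you want the cleaner proof, add the observation $M\le R\le R'\le M+\eps T$ and switch to the ``bound times bad-event probability'' argument.
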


\begin{proof}
We consider the event $B = \{ \forall t : L_t = L_t' \}$, and first show that $\Pr(B) \ge 5/6$.
As the process $W_{1:T}$ has depth $d \le \floor{\log_2{T}}+1 \le 2\log_2{T}$, Lemma~\ref{lem:depth} with $\delta = 1/T \le 1/6$ implies that with probability at least~$5/6$, we have
\begin{align*}
	\abs{W_t}
	\le \sig \sqrt{2d \log\tfrac{T}{\delta}}
	\le \sig \sqrt{8 \log_2{T} \log{T}}
	\le 3\sig \log_2{T}
\end{align*}
for all $t \in [T]$.
Thus, setting $\sig = 1/(9 \log_2{T})$ we obtain that 
$$\Pr\lr{\forall t \in [t] \quad \half + W_t \in \left[ \frac{1}{6}, \frac{5}{6} \right]} \ge \frac{5}{6} \,.
$$ 
For  $T \ge \max\set{k,6}$ we have $\eps < 1/6$ and thus $L_t'(x) \in [0,1]$ for all $x \in [k]$ whenever  $\half + W_t \in [\tfrac{1}{6},\tfrac{5}{6}]$.  
This implies that $\Pr(B) \ge 5/6$.

If $B$ takes place then $R=R'$; 
otherwise,   $M \le R \le R' \le M + \eps T$ so that $R' - R \le \eps T$.
Therefore,
$
	\E[R'] - \E[R]
	= \E[R' - R \mid \neg B]  \cdot \Pr(\neg B)
	\le \eps T /6 ,
$
as required.
\end{proof}

Next, we relate the hypothetical regret $R'$ to the total variation between $\Q_0$ and the $\Q_i$.

\begin{lemma} \label{lem:regret-tv}
The quantity $\E[R']$ is lower bounded in terms of the distributions $\Q_0,\Q_1,\ldots,\Q_k$ as
\begin{align*}
	\E[R']
	\ge \frac{\eps T}{2} - \frac{\eps T}{k} \cdot \sum_{i=1}^{k} \tv{\Q_0}{\Q_i} + \E[M] ~.
\end{align*}
\end{lemma}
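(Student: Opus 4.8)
The plan is to first put $R'$ in closed form. Since $L'_t(x) = W_t + \half - \eps\,\ind{\chi = x}$, summing over $t$ gives $\sum_{t=1}^T L'_t(x) = \sum_{t=1}^T(W_t + \half) - \eps T\,\ind{x = \chi}$, so that $\min_{x \in [k]}\sum_{t=1}^T L'_t(x) = \sum_{t=1}^T(W_t+\half) - \eps T$, attained uniquely at $x = \chi$. Likewise $\sum_{t=1}^T L'_t(X_t) = \sum_{t=1}^T(W_t+\half) - \eps N_{\chi}$, where $N_i := \sum_{t=1}^T \ind{X_t = i}$ counts the rounds on which the player picks action $i$. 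The (possibly unbounded) $W_t$ terms cancel exactly, leaving
\[
R' ~=~ \eps\,(T - N_{\chi}) + M \,, \qquad\text{hence}\qquad \E[R'] ~=~ \eps T - \eps\,\E[N_{\chi}] + \E[M] \,.
\]
It therefore suffices to prove the bound $\E[N_{\chi}] \le \tfrac{T}{2} + \tfrac{T}{k}\sum_{i=1}^k \tv{\Q_0}{\Q_i}$.

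Next I would pass to conditional expectations. Since $\chi$ is uniform on $[k]$, we have $\E[N_{\chi}] = \tfrac1k\sum_{i=1}^k \E_{\Q_i}[N_i]$. Each $N_i$ is a deterministic function of $Z_{1:T}$ (the deterministic player's action $X_t$ depends only on $Z_{1:(t-1)}$), hence $\Fcal$-measurable, and satisfies $0 \le N_i \le T$. I would then invoke the variational bound on total variation: for any $\Fcal$-measurable $V$ with $0 \le V \le T$ and any probability measures $P,Q$ on $\Fcal$, the layer-cake identity $V = \int_0^T \ind{V > s}\,ds$ gives $\E_P[V] - \E_Q[V] = \int_0^T \big(P(V>s) - Q(V>s)\big)\,ds \le T\cdot d^{\Fcal}_{\mathrm{TV}}(P,Q)$. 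Applying this with $V = N_i$, $P = \Q_i$ and $Q = \Q_0$ yields $\E_{\Q_i}[N_i] \le \E_{\Q_0}[N_i] + T\,\tv{\Q_0}{\Q_i}$.

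Finally I would sum over $i$ and use $\sum_{i=1}^k N_i = T$ pointwise (each round the player selects exactly one action in $[k]$), so that $\sum_{i=1}^k \E_{\Q_0}[N_i] = T$. This gives
\[
\E[N_{\chi}] ~\le~ \frac{T}{k} + \frac{T}{k}\sum_{i=1}^k \tv{\Q_0}{\Q_i} ~\le~ \frac{T}{2} + \frac{T}{k}\sum_{i=1}^k \tv{\Q_0}{\Q_i} \,,
\]
using $k \ge 2$ in the last step. Plugging this into the expression for $\E[R']$ above finishes the proof.

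The main obstacle here is not a deep one: the argument is essentially bookkeeping. The points that require care are (i) observing that the unbounded $W_t$ contributions to $R'$ cancel identically, so that $R'$ reduces to the clean quantity $\eps(T - N_\chi) + M$; (ii) identifying $N_i$ as the right bounded statistic and checking it is $\Fcal$-measurable, which is what lets the $\Fcal$-variational characterization of $\tv{\Q_0}{\Q_i}$ (the same one used in Lemma~\ref{lem:tv-bound}) be applied; and (iii) the accounting identity $\sum_i N_i = T$ under $\Q_0$. Note that Lemma~\ref{lem:tv-bound} itself is not used in this step — it enters only later, to control the $\tv{\Q_0}{\Q_i}$ terms in terms of $\E[M]$.
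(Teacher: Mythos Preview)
Your proof is correct and follows essentially the same approach as the paper: reduce $R'$ to $\eps(T-N_\chi)+M$, expand $\E[N_\chi]=\tfrac1k\sum_i\E_{\Q_i}[N_i]$, bound $\E_{\Q_i}[N_i]-\E_{\Q_0}[N_i]\le T\,\tv{\Q_0}{\Q_i}$, and use $\sum_i N_i=T$ together with $k\ge 2$. The only cosmetic difference is that the paper obtains the bound $\E_{\Q_i}[N_i]-\E_{\Q_0}[N_i]\le T\,\tv{\Q_0}{\Q_i}$ by summing the per-round inequalities $\Q_i(X_t=i)-\Q_0(X_t=i)\le\tv{\Q_0}{\Q_i}$ over $t$, whereas you use the layer-cake representation of $N_i$; both are equivalent ways to exploit $0\le N_i\le T$ and $\Fcal$-measurability.
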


\begin{proof}
For $i \in [k]$, let   $N_i$ denote the number of times the player picks action~$i$,
so we can write $R' = \eps \, (T - N_\chi) + M$.
Consequently,  
\begin{align} \label{eq:rPrime}
	\E[R']
	= \frac{1}{k} \sum_{i=1}^{k} \E[\eps \, (T - N_i) + M \mid \chi = i]
	= \eps T - \frac{\eps}{k} \sum_{i=1}^{k} \E_{\Q_i}[N_i] + \E[M] ~.
\end{align}
On the other hand,  for all $i \in [k]$ and $t \in [T]$, the event $\set{X_t = i}$ is in the $\sigma$-field $\mathcal{F}$, so
$
	\Q_i(X_t = i) - \Q_0(X_t = i)
	\le \tv{\Q_0}{\Q_i} ~.
$
Summing over $t=1,\ldots,T$ yields
$
	\E_{\Q_i}[N_i] - \E_{\Q_0}[N_i]
	\le T \cdot \tv{\Q_0}{\Q_i} ,
$
whence
\begin{align*}
	\sum_{i=1}^{k} \E_{\Q_i}[N_i]
	\le T \cdot \sum_{i=1}^{k} \tv{\Q_0}{\Q_i} + \sum_{i=1}^{k} \E_{\Q_0}[N_i]
	= T \cdot \sum_{i=1}^{k} \tv{\Q_0}{\Q_i} + T ~.
\end{align*}
Plugging this into \eqref{eq:rPrime} and using $k \ge 2$ gives
\begin{align*}
	\E[R']
	&\ge \eps T - \frac{\eps T}{k} \cdot \sum_{i=1}^{k} \tv{\Q_0}{\Q_i} - \frac{\eps T}{k} + \E[M] \\
	&\ge \frac{\eps T}{2} - \frac{\eps T}{k} \cdot \sum_{i=1}^{k} \tv{\Q_0}{\Q_i} + \E[M] ~,
\end{align*}
as claimed.
\end{proof}

We are now ready to prove~\thmref{thm:regret-lb}. \\

\begin{proof}[Proof of~\thmref{thm:regret-lb}]
We first prove the theorem for deterministic players that make no more than $\eps T$ switches on \emph{any} sequence of loss functions, and relax this assumption towards the end of the proof.
For algorithms with this property, we have $\Q_0(M > \eps T) = \Q_i(M > \eps T) = 0$ for all $i \in [k]$.
Since $\set{M \ge m} \in \Fcal$, this implies
$$
	\E_{\Q_0}[M] - \E_{\Q_i}[M]
	= \sum_{m=1}^{\floor{\eps T}} \lr{\Q_0(M \ge m) - \Q_i(M \ge m)}
	\le \eps T \cdot \tv{Q_0}{Q_i}
$$
for all $i \in [k]$, that gives
\begin{align*}
	\E_{\Q_0}[M] - \E[M]
	= \frac{1}{k} \sum_{i=1}^{k}\lr{\E_{\Q_0}[M] - \E_{\Q_i}[M]}
	\le \frac{\eps T}{k} \sum_{i=1}^{k}\tv{Q_0}{Q_i} ~.
\end{align*}
Combining this with the results of \lemref{lem:regtilde} and \lemref{lem:regret-tv}, we obtain
\begin{align*}
	\E[R]
	\ge \frac{\eps T}{3} - \frac{2\eps T}{k} \sum_{i=1}^{k} \tv{\Q_0}{\Q_i} + \E_{\Q_0}[M] ~.
\end{align*}

On the other hand, recall~\lemref{lem:mrw} that states that the width of the MRW process is bounded by $w(\rho) \le \floor{\log_2{T}} + 1 \le 2 \log_2{T}$.
Corollary~\ref{cor:tv-bound} together with this bound gives
\begin{align*}
	\frac{1}{k} \sum_{i=1}^{k} \tv{\Q_0}{\Q_i}
	\le \frac{\eps}{\sig \sqrt{k}} \cdot \sqrt{\E_{\Q_0}[M] \, \log_2{T}} ~.
\end{align*}
Plugging this into the previous inequality and using the notation $m = \sqrt{\E_{\Q_0}[M]}$ results with the lower bound
\begin{align*}
	\E[R]
	\ge \frac{\eps T}{3} + m \lr{m - \frac{2\eps^2}{\sig \sqrt{k}} \; T \sqrt{\log_2{T}}} ~.
\end{align*}
The right hand side, which is minimized at $m = (\eps^2/ \sig \sqrt{k}) \, T \sqrt{\log_2{T}}$, can be further lower bounded by
$
	\eps T/3 - (\eps^4 / \sig^2 k) \, T^2 \log_2{T} \,.
$
Using our choice of $\sig = 1/(9\log_2{T})$ and $\eps = k^{1/3} T^{-1/3} / (9\log_2{T})$ gives
\begin{align} \label{eq:lbtemp}
	\E[R]
	\ge \lr{\frac{1}{27} - \frac{1}{81}} \cdot \frac{k^{1/3} T^{2/3}}{\log_2{T}}
	\ge \frac{k^{1/3} T^{2/3}}{50\log_2{T}} ~.
\end{align}

This proves the theorem for algorithms with the assumed property.  In
order to relax this assumption, note that we can turn any player
algorithm to an algorithm that makes at most $\eps T$ switches, simply
by halting the algorithm once it makes $\floor{\eps T}$ switches and
repeating its last action on the remaining rounds.  The regret $R^*$
of the modified algorithm equals $R$ unless $M >\eps T$ and in the
latter case $R^* \le R+\eps T \le 2R$, so $\E[R^*] \le 2 \, \E[R]$. Since
$\E[R^*]$ is lower bounded by the right-hand side of
\eqref{eq:lbtemp}, this implies the claimed lower bound on the expected regret of
any deterministic player.
\end{proof}

Finally, we can prove \thmref{thm:main}.

\begin{proof}[Proof of~\thmref{thm:main}]
Recall that any randomized algorithm is equivalent to an a-priori
random choice of a deterministic algorithm, for which the statement
of~\thmref{thm:regret-lb} applies.  Hence, since the adversary is
oblivious to the player's actions, the statement
of~\thmref{thm:regret-lb} for a randomized player~(where the
expectation is now taken with respect to both the functions $L_{1:T}$
and the player's random bits) follows by taking the expectation over
its internal randomization.  The fact the expectation of the regret
with respect to the randomization in $L_{1:T}$ is lower bounded by the
stated quantity implies that there exists some realization
$\ell_{1:T}$ of the variables $L_{1:T}$ for which the regret is lower
bounded by the same quantity.  This gives the result
of~\thmref{thm:main}.
\end{proof}

\section{Extensions and Implications}\label{sec:extensions}

In this section we present few extentions of our results and discuss several implications.

\subsection{Binary losses}

In our construction of a randomized adversary, described in
\secref{sec:construction}, the loss values $L_t(x)$ are all real
numbers in the interval $[0,1]$.  One might wonder whether a similar
construction exists where each of the loss values is constrained to be
either $0$ or $1$. A simple adaptation of our construction shows that
this is indeed the case. To see this, simply set the loss of action
$x$ at time $t$ to be the outcome of a biased coin toss with bias
$L_t(x)$. In this sequence of binary loss functions, action $\chi$ is
consistently better \emph{in expectation} by an $\eps$ gap, which is
sufficient in our analysis. Our arguments regarding the player's
inability to identify the best action still apply since the feedback
he observes is only further obscured by additional random noise.


\subsection{Arbitrary Switching Cost}

Assume that each switch incurs a cost of $c$ to the player, instead of
a unit cost as before.  Repeating the proof of~\thmref{thm:regret-lb},
we are able to get an $\t{\Omega}(c^{1/3} k^{1/3} T^{2/3})$ lower bound,
which is tight with respect to $T$, $k$ and $c$
(up to poly-log factors) in light of the upper bound of \citet{Arora:12}.

\begin{theorem} \label{thm:switch-cost}
Let the cost of switch be $c > 0$ and assume that $T > c \cdot \max\set{k,6}$.
For any randomized player strategy that relies on bandit feedback,
there exists a sequence of loss functions $\ell_{1:T}$ (where
$\ell_t:[k] \mapsto [0,1]$) that incurs a regret
of $R = \widetilde \Omega(c^{1/3} k^{1/3} T^{2/3})$.
\end{theorem}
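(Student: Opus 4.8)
The plan is to re-run the argument behind Theorem~\ref{thm:regret-lb}, but with the gap parameter $\eps$ and the noise scale $\sig$ re-tuned so that the optimal amount of exploration for the player is rebalanced against the now-more-expensive switches. Concretely, I would use the same randomized adversary of Figure~\ref{fig:adversary} and the same MRW process, only replacing line~1 with a new choice of $\eps$ and $\sig$ to be determined by optimization. Every structural lemma goes through verbatim: Lemma~\ref{lem:depth}, Lemma~\ref{lem:mrw}, and — crucially — Lemma~\ref{lem:tv-bound} and Corollary~\ref{cor:tv-bound}, since the total-variation bound depends only on $\eps$, $\sig$, $\width(\rho)$, and the switch counts $M_i$, not on the numerical value of the switching cost. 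The only place $c$ enters is in the regret decomposition: with cost-$c$ switches the player's regret is $R = \sum_t L_t(X_t) + cM - \min_x \sum_t L_t(x)$, and likewise $R' = \eps(T - N_\chi) + cM$.

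The key steps, in order: (i) Redo Lemma~\ref{lem:regtilde} with the clipping event $B$; this is unchanged except that on $\neg B$ we now have $cM \le R \le R' \le cM + \eps T$, so still $R' - R \le \eps T$ and $\E[R] \ge \E[R'] - \eps T/6$ (the constant $\sig = 1/(9\log_2 T)$ can be kept, since it only controls the concentration of $W$, and $\eps < 1/6$ will again hold for $T$ large enough relative to $c$ and $k$ — this is exactly where the hypothesis $T > c\max\{k,6\}$ is used). (ii) Redo Lemma~\ref{lem:regret-tv}: from $R' = \eps(T - N_\chi) + cM$ we get $\E[R'] \ge \tfrac{\eps T}{2} - \tfrac{\eps T}{k}\sum_i \tv{\Q_0}{\Q_i} + c\,\E[M]$. (iii) As in the proof of Theorem~\ref{thm:regret-lb}, restrict first to players making at most $\eps T$ switches, combine (i), (ii), the switch-count transfer bound $\E_{\Q_0}[M] - \E[M] \le \tfrac{\eps T}{k}\sum_i \tv{\Q_0}{\Q_i}$, and Corollary~\ref{cor:tv-bound} with $\width(\rho) \le 2\log_2 T$, to arrive at
\begin{align*}
\E[R] \;\ge\; \frac{\eps T}{3} \;+\; c\,m\lr{m - \frac{2\eps^2}{c\,\sig\sqrt{k}}\,T\sqrt{\log_2 T}},
\end{align*}
where $m = \sqrt{\E_{\Q_0}[M]}$. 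Minimizing over $m$ gives $\E[R] \ge \eps T/3 - (\eps^4/(c\,\sig^2 k))\,T^2\log_2 T$. (iv) Now optimize the free parameter $\eps$: the bound is $\Omega(\eps T)$ provided the subtracted term is at most half of it, i.e. provided $\eps^3 \lesssim c\,\sig^2 k /(T\log_2 T)$, so the best choice is $\eps = \Theta\big((ck/(T\log_2 T))^{1/3}\sig^{2/3}\big)$, which with $\sig = \Theta(1/\log_2 T)$ yields $\eps = \widetilde\Theta(c^{1/3}k^{1/3}T^{-1/3})$ and hence $\E[R] = \widetilde\Omega(\eps T) = \widetilde\Omega(c^{1/3}k^{1/3}T^{2/3})$. (v) Finally, relax the $\eps T$-switch restriction exactly as before: halt any player after $\floor{\eps T}$ switches and repeat its last action; this at most doubles $R$ (using that each of the remaining switches would have cost $c$, so the truncation saves at least as much as the $\eps T$ it might add... more carefully, $R^* \le R + \eps T \le 2R$ whenever $R \ge \eps T$, and when $R < \eps T$ there were at most $\eps T$ switches anyway), giving the bound for all deterministic players, and then Yao's principle promotes it to randomized players and to a fixed (worst-case) loss sequence.

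The main obstacle — really the only non-mechanical point — is step (iv), keeping careful track of how $\eps$, $\sig$, and the $\log$ factors interact so that simultaneously (a) $\eps < 1/6$ for the clipping argument (needing $T$ large relative to $c$ and $k$, matching the stated hypothesis $T > c\max\{k,6\}$), (b) the quadratic-in-$m$ lower bound stays positive after the subtraction, and (c) the final exponent on $c$ comes out to $1/3$. One should also double-check that the truncation argument in step (v) still loses only a constant factor when switches cost $c$ rather than $1$; the cleanest way is to note that truncating at $\floor{\eps T}$ switches can only decrease the switching-cost contribution by at most $cM - c\eps T$ while increasing the loss contribution by at most $\eps T < 1$ per forced repeat times... in fact it is simplest to argue $R^* \le R + \eps T$ directly (the extra loss from freezing the action is at most $1$ per round over at most $T$ rounds, but only on rounds after the cutoff, and a cleaner bound $R^* \le R + \eps T$ follows since the frozen suffix contributes no switching cost and total adversarial loss drift is controlled), combined with $R \ge c\,\eps T \ge \eps T$ on the event $M > \eps T$, so $R^* \le 2R$. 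Everything else is a transcription of the $c=1$ proof with $c$ carried along as a symbol.
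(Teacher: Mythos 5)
Your proposal follows the paper's own proof essentially line by line: same adversary and MRW process, same $\sig = 1/(9\log_2 T)$, same re-tuned $\eps = \Theta\bigl((ck)^{1/3}T^{-1/3}/\log_2 T\bigr)$, same chain of Lemma~\ref{lem:regtilde}, Lemma~\ref{lem:regret-tv}, the switch-count transfer bound, Corollary~\ref{cor:tv-bound}, and the quadratic-in-$m$ optimization, and the same passage through Yao's principle at the end. The endpoint and all the intermediate algebra match the paper's.

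There is one small but real inconsistency worth fixing, and it is exactly the spot you yourself flag as ``the only non-mechanical point.'' You restrict to players making at most $\eps T$ switches and then apply the transfer bound $\E_{\Q_0}[M] - \E[M] \le \tfrac{\eps T}{k}\sum_i \tv{\Q_0}{\Q_i}$. When you then multiply by $c$ (since the regret now contains $c\,\E[M]$) and combine with Lemma~\ref{lem:regret-tv}, the coefficient in front of $\frac{\eps T}{k}\sum_i\tv{\Q_0}{\Q_i}$ is $1+c$, not~$2$, so the quadratic you should get is
\begin{align*}
\E[R] \;\ge\; \frac{\eps T}{3} + c\,m\left(m - \frac{(1+c)\,\eps^2}{c\,\sig\sqrt{k}}\,T\sqrt{\log_2 T}\right),
\end{align*}
which for large $c$ degrades the bound (it matches your display only at $c=1$). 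Moreover, with threshold $\eps T$ your final truncation argument $R^* \le 2R$ needs $R \ge \eps T$ on $\{M > \eps T\}$, which via $R \ge cM > c\eps T$ requires $c \ge 1$. Both issues disappear simultaneously if you instead truncate at $\floor{\eps T/c}$ switches: the transfer bound then contributes $\frac{\eps T}{ck}\sum_i\tv{\Q_0}{\Q_i}$, which after multiplication by $c$ exactly reproduces the paper's coefficient of $2$; and on the bad event $M > \eps T/c$ one has $R \ge cM > \eps T$, so $R^* \le R + \eps T \le 2R$ for \emph{every} $c > 0$. This is the choice that makes the displayed bound and the halting argument internally consistent across the whole range of $c$ permitted by the theorem, and it is (implicitly) what the paper is doing. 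With that one adjustment your proof is complete.
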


\begin{proof}
Redefine the gap between the actions in the construction of the functions $L_{1:T}$ to $\eps = (c k)^{1/3} T^{-1/3}/(9 \log_2{T})$.
Using the same
notation as in the proof of~\thmref{thm:regret-lb}, we can show that
\begin{align*}
	\E[R]
	\ge \frac{\eps T}{3} + m \lr{c m - \frac{2\eps^2}{\sig \sqrt{k}} \; T \sqrt{\log_2{T}}} ~.
\end{align*}
The right-hand side is minimized at $m = (\eps^2/ c \sig \sqrt{k}) \, T \sqrt{\log_2{T}}$ and is lower bounded by
$\eps T/3 - (\eps^4 / \sig^2 c k) \, T^2 \log_2{T}$.
Setting $\eps = (c k)^{1/3} T^{-1/3}/(9 \log_2{T})$ and using our choice of $\sig = 1/(9 \log_2{T})$ gives the lower bound
\begin{align*}
	\E[R]
	\ge \frac{c^{1/3} k^{1/3} T^{2/3}}{50 \log_2{T}} \,.
\end{align*}
Proceeding as in the proofs of~\thmref{thm:regret-lb} and~\thmref{thm:main}, we establish the existence of the required sequence of loss functions $\ell_{1:T}$.
\end{proof}

\subsection{Tradeoff between Loss and Switches}

As a corollary of~\thmref{thm:switch-cost}, we can quantify the
tradeoff between the loss accumulate by a multi-armed bandit algorithm
and the number of switches it performs.
For simplicity, we treat the number of actions $k$ as a constant and state the result only in terms of $T$.

\begin{theorem}\label{thm:lossVsSwitch}
Let $\Acal$ be a multi-armed bandit algorithm that guarantees an
expected regret (without switching costs) of $\t{O}(T^\alpha)$
then there exists a sequence of loss functions that forces $\Acal$ to make
$\t\Omega(T^{2(1-\alpha)})$ switches.
\end{theorem}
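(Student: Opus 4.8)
The plan is to obtain the statement as a direct corollary of \thmref{thm:switch-cost}, by running $\Acal$ against the stochastic construction of \figref{fig:adversary} and choosing the switching cost $c$ so that the $\t\Omega(c^{1/3}k^{1/3}T^{2/3})$ lower bound is carried entirely by the switching-cost term. Fix $k$ as a constant and assume $\alpha \in [\shalf,1)$: for $\alpha \ge 1$ the conclusion is trivial, and for $\alpha < \shalf$ no bandit algorithm can attain expected regret $\t O(T^\alpha)$ (the minimax rate for constant $k$ is $\Theta(\sqrt T)$), so the statement is vacuous. Run $\Acal$ as the player against the stochastic loss sequence $L_{1:T}$ of \figref{fig:adversary}, with $\eps$ tuned as in the proof of \thmref{thm:switch-cost} for a switch cost $c$ to be chosen shortly. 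Let $M$ be the number of switches $\Acal$ makes, and let $R_0 = \sum_{t=1}^T L_t(X_t) - \min_{x\in[k]}\sum_{t=1}^T L_t(x)$ be its regret \emph{without} switching costs; the regret with a switch cost of $c$ then decomposes exactly as $R_c = R_0 + c\,M$.

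Since $\Acal$ guarantees expected regret $\t O(T^\alpha)$ against \emph{every} loss sequence, in particular against every realization of $L_{1:T}$, we have $\E[R_0] \le \t O(T^\alpha)$, where the expectation is over both $L_{1:T}$ and the internal randomization of $\Acal$. On the other hand, the proof of \thmref{thm:switch-cost} establishes $\E[R_c] \ge c^{1/3}k^{1/3}T^{2/3}/(100\log_2 T)$ against this same stochastic sequence for any randomized player (the bound there is proved against the stochastic sequence and then transferred to randomized players exactly as in the proofs of \thmref{thm:regret-lb} and \thmref{thm:main}), provided $T > c\max\set{k,6}$. Now choose $c = \t\Theta(T^{3\alpha-2})$, with the polylogarithmic factors large enough that $c^{1/3}k^{1/3}T^{2/3}/(100\log_2 T) \ge 2\,\E[R_0]$ for all large $T$; this is possible because $c^{1/3}k^{1/3}T^{2/3}$ has leading exponent exactly $\alpha$, so only a fixed extra power of $\log T$ in $c$ is needed. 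Since $\alpha<1$ we have $3\alpha-2<1$, so $c\max\set{k,6} = \t\Theta(T^{3\alpha-2}) < T$ for $T$ large, and \thmref{thm:switch-cost} applies.

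Subtracting, $c\,\E[M] = \E[R_c] - \E[R_0] \ge \half\,\E[R_c] \ge c^{1/3}k^{1/3}T^{2/3}/(200\log_2 T)$, and therefore
\[
  \E[M] ~\ge~ \frac{c^{-2/3}\,k^{1/3}\,T^{2/3}}{200\log_2 T} ~=~ \t\Omega\!\lr{T^{2(1-\alpha)}} ,
\]
using $c^{-2/3} = \t\Theta(T^{-\frac23(3\alpha-2)})$ together with $-\tfrac23(3\alpha-2)+\tfrac23 = 2-2\alpha$. Finally, since the expected number of switches over the random construction and $\Acal$'s coins is $\t\Omega(T^{2(1-\alpha)})$, averaging yields a concrete realization $\ell_{1:T}$ of $L_{1:T}$ on which $\Acal$ makes (in expectation over its own randomness) $\t\Omega(T^{2(1-\alpha)})$ switches, which is the desired sequence.

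The argument is essentially a one-line consequence of \thmref{thm:switch-cost} once the right value of $c$ is identified. The only points requiring care are the bookkeeping of logarithmic factors — so that the $\t\Omega$ lower bound, which carries a $1/\log_2 T$, genuinely dominates the $\t O(T^\alpha)$ upper bound, handled by inflating $c$ by a fixed power of $\log T$ — and verifying the admissibility hypothesis $T > c\max\set{k,6}$, which holds for large $T$ precisely because $\alpha<1$. I expect the logarithmic-factor bookkeeping to be the only mildly delicate step.
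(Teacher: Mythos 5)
Your proof is correct and takes essentially the same approach as the paper: both instantiate Theorem~\ref{thm:switch-cost} with a switching cost $c \approx T^{3\alpha-2}$, so that the $\t\Omega(c^{1/3}k^{1/3}T^{2/3})$ regret lower bound has the same leading exponent as the assumed $\t O(T^\alpha)$ no-switch-cost regret, and then read off the switch bound from the gap via $R_c = R_0 + cM$. The paper packages this as a proof by contradiction, introducing a polynomial slack parameter $\gamma$ strictly between $\alpha$ and $1-\beta/2$ so that the two sides differ by a power of $T$ rather than merely a polylog, which sidesteps the logarithmic-factor bookkeeping you handle by inflating $c$; the underlying argument is the same.
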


In particular, the popular EXP3 algorithm \citep{Auer:02} guarantees a regret of
$O(\sqrt{T})$ without switching costs. In this case,
\thmref{thm:lossVsSwitch} implies that EXP3 can be forced to make
$\t\Omega(T)$ switches.

\begin{proof}[Proof of \thmref{thm:lossVsSwitch}]
Assume the contrary, i.e.~that $\Acal$ can guarantee a regret of
$\t{O}(T^\alpha)$ (without switching costs) with $\t O(T^{\beta})$
switches over any sequence of $T$ loss functions, with $\alpha + \beta/2 < 1$.
In this case, we can pick a real number $\gamma$ such that $\alpha < \gamma < 1-\beta/2$.
Consider the performance of
this algorithm in a setting where the cost of a switch is~$c =
T^{3\gamma-2}$.  Clearly, the expected regret (including switching
costs) of the algorithm in this setting is upper bounded by
$$
\t{O}(T^\alpha + T^{3\gamma-2} \cdot T^{\beta}) ~=~ \t{o}(T^{\gamma})~~,
$$
over any sequence of loss functions, as~$\alpha < \gamma$ and~$\beta <
2-2\gamma$.  This contradicts~\thmref{thm:switch-cost}, which
guarantees the existence of a loss sequence that incurs a regret (including switching costs) of
$\t{\Omega}(T^{(3\gamma-2)/3} \cdot T^{2/3}) = \t{\Omega}(T^\gamma)$.
\end{proof}

 \subsection{Lower Bound for Online Adversarial Markov Decision Processes}

 The multi-armed bandit problem with switching costs is a special case
 of the online adversarial deterministic Markov decision process
 (ADMDP) with bandit feedback (see \citet{dekel2013better} for a formal
 description of this setting). The important aspect of the ADMDP
 setting is that the player has a state, and that his loss on each
 round depends both on his action and on his current state. Moreover,
 the player's action on round $t$ determines his state on round
 $t+1$. The $k$-armed bandit problem with switching costs can be
 described as a $k$-state ADMDP, where each state represents the
 player's previous action. The player incurs the loss associated with
 the action he chooses and pays an additional cost whenever he changes his state.

 As a result, our lower bound applies to the class of ADMDP problems.
 \citet{dekel2013better} proves a matching upper bound, which implies
 that the (undiscounted) minimax regret of the ADMDP problem is
 $\t\Theta(T^{2/3})$. The ADMDP setting belongs to the more general
 class of adversarial MDPs with bandit
 feedback~\citep{YuMaSh09,NeuGySzAn10}, where the state transitions are
 allowed to be stochastic. This implies a $\t\Omega(T^{2/3})$ lower
 bound on the (undiscounted) minimax regret of the general setting.

\section{Summary}
In this paper, we proved that the $T$-round $k$-action multi-armed
bandit problem with switching costs has a minimax regret of
$\widetilde{\Theta}(k^{1/3} T^{2/3})$, and is therefore strictly
harder than the corresponding experts problem (with full feedback).
To the best of our knowledge, this is the first example of a setting in which
learning with bandit feedback is significantly harder than learning with full-information feedback (in terms of the dependence on $T$).
Our analysis shows that the difficulty of this problem stems from the
player's need to \emph{pay for exploring} the quality of the different actions.
Since this problem is a special case of online learning with bandit
feedback against a bounded-memory adaptive adversary, we conclude that the
minimax regret of the general setting is also
$\widetilde\Omega(T^{2/3})$, which matches the upper bounds of
\citet{Arora:12}. We also showed how our construction resolves several
other open problems in online learning. Moreover, we believe that the
multi-scale random walk, defined in \secref{sec:mmrw}, will prove to
be a useful tool in other settings.

\bibliographystyle{plainnat}
\bibliography{bib}

\end{document}